\documentclass{article}
\usepackage[margin=1in]{geometry}  

\usepackage{soul}
\usepackage{microtype}
\usepackage{graphicx}
\usepackage{subcaption}
\usepackage{booktabs} 
\usepackage{bbm}
\usepackage{dsfont}
\usepackage{authblk}  

\usepackage{hyperref}

\usepackage{amsmath}
\usepackage{amssymb}
\usepackage{mathtools}
\usepackage{amsthm}
\usepackage{thmtools}
\usepackage{natbib}
\usepackage{enumitem}
 \usepackage[nameinlink]{cleveref}

\theoremstyle{plain}
\newtheorem{theorem}{Theorem}[section]

\newtheorem{lemma}[theorem]{Lemma}

\theoremstyle{definition}
\newtheorem{definition}[theorem]{Definition}
\newtheorem{assumption}[theorem]{Assumption}
\theoremstyle{remark}
\newtheorem{remark}[theorem]{Remark}

\title{Expressivity-Efficiency Tradeoffs for Hybrid Sequence Models}

\author[*1]{John Cooper}
\author[1]{Ilias Diakonikolas}
\author[*1]{Mingchen Ma}
\author[1]{Frederic Sala}

\affil[1]{Department of Computer Sciences, University of Wisconsin-Madison}

\affil[ ]{\texttt{\{jfcooper2, mingchen\}@cs.wisc.edu}}

\date{}

\newtheorem{informal theorem}[theorem]{Theorem (informal statement)}

\newtheorem{claim}[theorem]{Claim}

\newcommand{\bv}{\mathbf{v}}

\newcommand{\cB}{\mathcal{B}}

\newcommand{\cM}{\mathcal{M}}
\newcommand{\cN}{\mathcal{N}}

\newcommand{\cS}{\mathcal{S}}

\newcommand{\cV}{\mathcal{V}}

\newcommand{\sN}{\mathbb{N}}

\newcommand{\sR}{\mathbb{R}}

\newcommand{\sZ}{\mathbb{Z}}

\newcommand{\lp}{\left}
\newcommand{\rp}{\right}

\DeclareMathOperator*{\pr}{\mathbf{Pr}}

\DeclareMathOperator*{\argmax}{argmax}

\newcommand{\err}{\mathrm{err}}

\newcommand{\R}{\sR}

\newcommand{\Z}{\sZ}
\newcommand{\N}{\sN}

\newcommand{\poly}{\mathrm{poly}}

\newcommand{\calN}{{\cal N}}

\newcommand{\Ind}{\mathds{1}}

\newcommand{\matr[1]}{\boldsymbol{#1}}

\newcommand{\x}{\mathbf{x}}

\newcommand{\card}[1]{|#1|}

\renewcommand\Pr{\pr}

\newcommand{\Attn}{\mathrm{Attn}}
\newcommand{\SSM}{\mathrm{SSM}}
\newcommand{\MB}{\mathrm{Mamba}}
\renewcommand{\^}[1]{^{(#1)}}
\newcommand{\One}{\mathbbm{1}}

\newcommand{\AT}{\mathrm{AT}}
\newcommand{\TF}{\mathrm{TF}}
\newcommand{\MLP}{\mathrm{MLP}}

\newcommand{\W}{\mathbf{W}}

\renewcommand{\poly}{\text{poly}}

\begin{document}

\maketitle

\begin{abstract}
        Hybrid sequence models---combining Transformer and state-space model layers---seek to gain the expressive versatility of attention as well as the computational efficiency of state-space model layers. Despite burgeoning interest in hybrid models, we lack a basic understanding of the settings where—and underlying mechanisms through which—they offer benefits over their constituent models. In this paper, we study this question, focusing on a broad family of core synthetic tasks. For this family of tasks, we prove the existence of fundamental limitations for non-hybrid models. Specifically, any Transformer or state-space model that solves the underlying task requires either a large number of parameters or a large working memory. On the other hand, for two prototypical tasks within this family—namely selective copying and associative recall—we construct hybrid models of small size and working memory that provably solve these tasks, thus achieving the best of both worlds. Our experimental evaluation empirically validates our theoretical findings. Importantly, going beyond the settings in our theoretical analysis, we empirically show that learned---rather than constructed---hybrids outperform non-hybrid models with up to $6 \times$ as many parameters. We additionally demonstrate that hybrid models exhibit stronger length generalization and out-of-distribution robustness than non-hybrids. \footnote{Code is available \href{https://github.com/SprocketLab/hybrid-expressivity}{in this link}. * denotes equal contribution.}
\end{abstract}

\section{Introduction}
Transformers are the workhorse architecture for modern language models. While highly expressive and capable, Transformer-based models suffer from high complexity, particularly for inference time processing of long sequence inputs. As a result, developing alternative non-Transformer architectures has become among the most important problems in LLM development. Structured state space models (SSMs) like Mamba \cite{gu2024mamba} are among the most promising such alternatives. Such models trade off complexity for expressivity \cite{jelassi2024repeat}, achieving higher throughput---but typically lower performance---compared to Transformer-based models.

A natural question is whether we can sidestep this tradeoff and produce a model architecture that offers the best of both worlds. \emph{Hybrid sequence models} seek to achieve this objective. These models, which mix layers from Transformer architectures (e.g., attention layers) with SSM layers, ideally outperform either Transformer-only or SSM-only models. In a short time, hybrids that can empirically do so on particular tasks have been scaled up from tiny sizes to as large as 50 billion parameters. For example, Nvidia's Nemotron-H hybrid model family  \cite{blakeman2025nemotron} offers both better downstream evaluation performance \emph{and} higher throughput (due to the presence of lower-complexity Mamba layers) than Transformer-only baselines. 

Despite these empirical successes, \textbf{\emph{we have no principled understanding}} of why hybrid models can outperform models made up of a single type of layer. Similarly, we do not yet know for what basic tasks we should expect hybrids to behave in this way. This paper takes the first steps towards providing a \textbf{\emph{fundamental theory addressing architectural tradeoffs for hybrid models}}. It does so by first showing that on a family of core tasks where \emph{pure} (i.e., standard Transformer-only and SSM-only) models provably suffer from limitations (in terms complexity and memory). In contrast, we build constructions of hybrid models that \emph{do not} have the same limitations on representative tasks---including key tasks like associative recall and selective copying---thus exhibiting provable benefits for hybrids.

Concretely, we evaluate the performance of a model by analyzing its \emph{input-independent memory (model size) and input-dependent memory (working memory)}. We focus on a \emph{function-composition} family of tasks (Fig.~\ref{fig:example}) that combine both a long-context control variable and a local context-addressable lookup; such tasks naturally model real-world data. For these, (i) under an injectivity condition, we prove 
that, for this family, any pure SSM requires large internal state (or many layers); as a result, their size scales linearly with respect to the hidden dimension of the problem to solve the problem. 
Likewise, (ii), under a local-sensitivity condition, any sliding-window Transformer (which includes full-window attention) requires a large window scaling linearly with respect to the length of the input context. Together, this pair of results indicates that for a broad class of tasks, pure SSM-based models and pure Transformer-based models \textbf{\emph{fail to achieve good expressivity and inference efficiency simultaneously}}. 
We study two representative synthetic tasks in this family, namely selective copying 
and a variant of associative recall \cite{arora2023zoology}. 
For these tasks, we construct \textbf{\emph{provably successful shallow hybrid models}} whose size scales with the logarithm of the size of the tasks while  using only sublinear memory. 

Empirically, we validate our theoretical results and investigate hybrid versus non-hybrid performance in further settings and on additional tasks, such multi-key associative recall (MKAR) and needle-in-a-haystack (NH). We find that for selective copying and MKAR, \textbf{\emph{hybrids can perform the task with similar or better quality than the pure models with {6 times fewer parameters}}}. For associative recall with decoding, at the tested scales, the pure models \emph{never} match the performance possible with the hybrid model. 
On top of measuring performance at fixed model sizes, we observe that hybrid models exhibit stronger length generalization and out-of-distribution (OOD) robustness. We see that when trained on the same distribution of short examples, hybrids consistently out-perform pure Transformers by around 10\% accuracy for long sequences.
For out-of-distribution testing, the hybrid model sometimes attains over 15\% higher performance than either the Transformer or the SSM with around the same number of parameters.

\noindent \textbf{Roadmap of the paper.}
In \Cref{sec notation}, we provide necessary preliminaries and notations. In \Cref{sec lb}, we introduce a family of tasks formulated as computing a function composition and provide conditions under which non-hybrid models fail to solve the tasks efficiently. Next, in \Cref{sec merge}, we focus on two specific tasks (of varying difficulty) that are within this family and construct hybrid models that outperform non-hybrids. In \Cref{sec experiments}, we conduct experiments to show the benefits of hybrid models empirically.

\begin{figure}[t]
    \centering
    \includegraphics[width=0.6\linewidth]{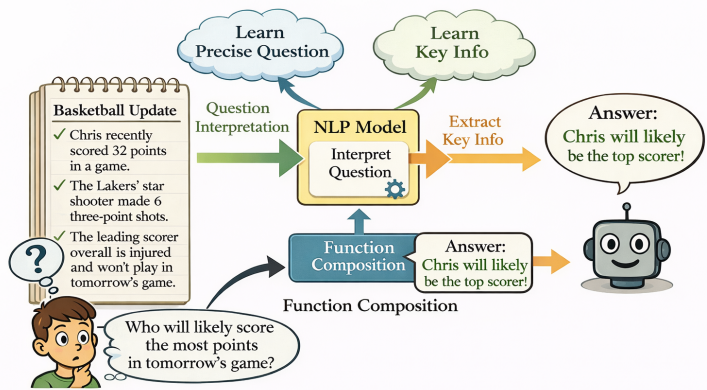}
    \caption{Example function composition task. The answer to a learned question only depends on a part of the long context input.}
    \label{fig:example}
    \vspace{-1em}
\end{figure}

\section{Preliminaries and Notations}\label{sec notation}
We provide necessary preliminaries and notation. A complete list of preliminaries is deferred to \Cref{app notations}.

We consider sequence-to-sequence token prediction problems. Let $\cV$ be some vocabulary of tokens, $V = \card{\cV}$, and $\vec{\x}=(\x_i)_{i=1}^L$ be an input sequence. A language model $M$ a is sequence-to-sequence map $M: \cV^L \to \cV^m$ of the form $M(\vec{\x}) = F_N\circ F_{N-1} \circ \dots \circ F_1 (\vec{\x})$, where $\circ$ is function composition and each $F_i$ is a sequence-to-sequence map called a \emph{layer}.  
We will consider types of layers.

\noindent \textbf{Transformer Layer.}
Consider an embedded input sequence $\vec{\x}=(\x_1,\dots,\x_L)$ such that $\x_i \in \R^d$. An attention head $\Attn$ is defined by matrices $\W_k,\W_q,\W_v \in \R^{d \times d}$ such that $\Attn(\vec{\x})_j = \sum_{i=1}^n \alpha_{ji} \W_v\x_i$, where
\begin{align*}
    \alpha_{ji} := \frac{\exp\left((\W_q \x_j) \cdot (\W_k \x_i)\right)}{\sum_{i=1}^n \exp\left((\W_q \x_j) \cdot (\W_k \x_i)\right)}.
\end{align*}
An attention layer $\AT$ is defined by $H$ attention heads $\Attn_1,\dots,\Attn_H$ and a projection matrix $\W_o \in \R^{d \times dH}$. Denote by $\mathbf{O}:= (\Attn_1(\vec{\x})^\top,\dots,\Attn_H(\vec{\x})^\top)^\top \in \R^{dH \times L}$ the concatenation of the outputs of the $H$ attention heads, so the attention layer $\AT(\vec{\x})$ outputs $\W_o \mathbf{O} \in \R^{d \times L}$.

A Transformer layer $\TF$ is defined by an attention layer $\AT$ and an MLP layer. In particular, an MLP layer is defined as $f(\x): \R^d \to \R^d$ as $f(\x) = \mathbf{U}_2 \sigma(\mathbf{U}_1 \x)$, where $\mathbf{U}_1,\mathbf{U}_2$ are matrices and $\sigma$ is an activation function applied coordinate-wise. Specifically, $\TF(\vec{\x}) = \MLP(\AT(\vec{\x}))$.

\noindent \textbf{State-Space Model Layer.}
We use a similar formalism of a SSM layer as in \citet{jelassi2024repeat}.
A state space $\mathcal{S}$ is a finite set. We denote by $\mathrm{mem}(\mathcal{S})$ the number of bits required to encode the states of $\mathcal{S}$, namely $\mathrm{mem}(\mathcal{S}) = \log(|\mathcal{S}|)$. A \textit{generalized state space model} (GSSM) is a layer defined by an update rule $u: \mathcal{S} \times \cV \rightarrow \mathcal{S}$ and an output function $r: \mathcal{S} \rightarrow \cV$. Let $s_0 \in \mathcal{S}$ be some initial state. Given some sequence $\x_1, \ldots, \x_L$, the state of the model at iteration $i$ is denoted by $s_i = S_i(\x_1, \ldots, \x_i)$ and the output token is denoted by $r_i=R_i(\x_1, \ldots, \x_i)$. The state and output are defined recursively:
\begin{enumerate}[leftmargin=*, nosep]
    \item $S_0(\emptyset) = s_0$,
    \item $S_i(\x_1, \ldots, \x_i) = u(S_{i-1}(\x_1, \ldots, \x_{i-1}), \x_i)$,
    \item $R_i(\x_1, \ldots, \x_i) = r(S_i(\x_1, \ldots, \x_i))$.
\end{enumerate}

\noindent \textbf{Memory Budget.} In this work, we will compare the behavior of different models according to their memory budget. In particular, we will consider two types of budgets: input-dependent memory and input-independent memory. \emph{Input-dependent memory}, also called \textit{working memory}, is the size of the intermediate state of the model, applicable to SSMs. \emph{Input-independent memory} is used to characterize the number of parameters in the model.

\section{Function Compositions and Limitations of Pure Models}\label{sec lb}
In this work, we define the following family of tasks that we term \emph{function-composition} tasks, which expose the limitations of pure models.

\begin{definition}[Function Composition]\label{def function composition}
Let $\cV$ be a vocabulary of tokens. Let $m,n \in \Z^+$. Consider a function $F(u,v) : \cV^m \times \cV^n \to \mathcal{Y}$. Let $u(\vec{\x}): \cV^L \to \cV^m$ and $v(\vec{\x}): \cV^L \to \cV^n$ be two functions that map a long sequence of tokens to parameters needed for computing $F$.
The goal for model $M$ is to compute $M(\vec{\x}) = F(u(\vec{\x}),v(\vec{\x}))$.
\end{definition}

Computing deep sequential function compositions has been used as a technique for understanding the limitations of Transformer-based models empirically or through communication complexity \cite{chen2024theoretical,dziri2023faith}. Though the family of tasks we consider here shares a similar flavor to prior works, we need only consider a function of composition with depth 1. Intuitively, it is convenient to think about $u(\vec{\x})$ as a subsequence of $\vec{\x}$ that contains essential information that one should look at (of length $m$ for $m \ll L$ but moderately long, i.e., the width of the necessary context), while $v(\vec{\x})$ can be thought as a small parameter that controls the result of $F(u,v)$.  

Many \emph{long context tasks} naturally fall into such function composition categories. For example, in a natural question answering task, the input context is usually very long, but the question (which must be learned from the context) only sparsely depends on part of the context. Transformers often struggle retrieving the information without consuming almost the whole sequence into memory, while after retrieving information, a pure SSM requires an extremely large state space to perform the rest of the computation.
We start by showing that for a broad class of very simple $u,v$, \emph{pure Transformers and pure state space models cannot compute $F(u,v)$ without sufficient scale}.

\subsection{Limitations of SSMs}
To make the above intuition formal,
we first provide conditions under which $F$ is hard to compute by an SSM. 

\begin{assumption}\label{asp ssm lb}
    Consider any function $F$ that satisfies \Cref{def function composition}. We say the function $F$ is \emph{hard to compute by an SSM} if it satisfies the following property: There exists a set $Q=\{v^{(i)}\}_{i=1}^q \subseteq \cV^n$ such that $G(u):=(F(u,v^{(1)}),\dots,F(u,v^{(q)}))$ is an injection.
\end{assumption}
Our first result shows that if $F$ satisfies \Cref{asp ssm lb}, then a $k$-layer SSM either requires $k$ to be $\Omega(m)$ or needs to have one layer with number of states exponential in $m$. That is, to compute $F$, \textit{the size of an SSM must grow linearly with respect to the hidden parameter $m$}. Formally, 
\begin{theorem}\label{th SSM lb general}
    Let $F$ be a function defined as in \Cref{def function composition} that satisfies \Cref{asp ssm lb}. There is a distribution $D$ over the input $(u,v)$ such that any model $M$ that is a composition of $k$ state space layers $\SSM_i$, with state space $\mathcal{S}_i, i \in [k]$ that can compute $F$ with probability $1/2$ must satisfy $\sum_{i=1}^k \log(\card{\mathcal{S}_i}) \ge \Omega(m\log(\card{\cV})-q\log(\card{\mathcal{Y}}))$.
\end{theorem}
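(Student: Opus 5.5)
The approach is a counting, or communication-style, argument: the composed SSM must compress $u$ into its joint state before it sees $v$. I will fix the input layout so that the tokens encoding $u$ come first, followed by the tokens encoding $v$. I take $D$ to draw $u$ uniformly from $\cV^m$ and, independently, draw $v$ uniformly from the set $Q$ of \Cref{asp ssm lb}. The first step is to simulate the $k$-layer composition as a single streaming machine. Layer $i$ processes the output stream of layer $i-1$ one token at a time, so the tuple $(s^{(1)}_t,\dots,s^{(k)}_t)$ of states after position $t$ determines all future outputs, given the remaining input. Hence, after the prefix encoding $u$, the model's behaviour on every suffix is a function of one element of $\mathcal{S}_1\times\dots\times\mathcal{S}_k$. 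The number of such elements is $2^{S}$ with $S=\sum_i\log\card{\mathcal{S}_i}$.

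The second step turns correctness into information about $u$. I define $\sigma(u)$ as the joint state after reading $u$. The model's answer on $(u,v^{(j)})$ is then $g_j(\sigma(u))$ for some fixed function $g_j$. I call $u$ \emph{good} if the model is correct on all $q$ queries $v^{(1)},\dots,v^{(q)}$. For a good $u$, the map $\sigma(u)\mapsto (g_1(\sigma(u)),\dots,g_q(\sigma(u)))=G(u)$ recovers $G(u)$. Since $G$ is injective, $\sigma$ is injective on good inputs, so the number of good inputs is at most $2^{S}$.

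The main obstacle is that success probability $1/2$ over $D$ does not force all $q$ queries to be correct for a given $u$. My first attempt uses an averaging argument in the style of \citet{jelassi2024repeat}. I extend the sketch so that the compressed object is the pair consisting of $\sigma(u)$ and a correction string. The correction string lists, for each $j$, either a flag that the answer is right or the true value $F(u,v^{(j)})\in\mathcal{Y}$. This string has at most $q\log(\card{\mathcal{Y}}+1)$ bits, which is where the $-q\log\card{\mathcal{Y}}$ term comes from. Together, $\sigma(u)$ and the correction string determine $G(u)$, and hence $u$.

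This would give an injection from all of $\cV^m$ into a set of size $2^{S+O(q\log\card{\mathcal{Y}})}$, i.e.\ $S\ge m\log\card{\cV}-O(q\log\card{\mathcal{Y}})$. Here the probability $1/2$ is not even needed. If the paper intends a stronger version of the statement, I would instead use Markov's inequality: at least a $1/4$ fraction of the $u$ have at least a $1/4$ fraction of their queries correct. Counting those $u$ with the correction string yields the same $\Omega(\cdot)$ bound, up to constants and an additive $O(1)$.

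Finally, I would check the formal point that the output function $r$ maps into $\cV$ while $F$ maps into $\mathcal{Y}$. I will treat the model's final output tokens as encoding $\mathcal{Y}$. Because the argument only uses that outputs after the $u$-prefix are determined by the joint state and the suffix, this encoding does not affect the bound.
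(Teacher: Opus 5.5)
\paragraph{The bound you derive is vacuous.} Your reduction of the $k$ layers to one streaming machine with joint state $\sigma(u)$ is fine, but the inequality you then derive has no content. Since $G:\cV^m\to\mathcal{Y}^q$ is injective, $\card{\cV}^m\le\card{\mathcal{Y}}^q$, i.e.\ $m\log\card{\cV}\le q\log\card{\mathcal{Y}}$. So the right-hand side of your conclusion $S\ge m\log\card{\cV}-q\log(\card{\mathcal{Y}}+1)$ is negative, and the inequality holds for every SSM, including a one-state model that is always wrong.

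Your remark that the success probability ``is not even needed'' is the symptom. A flag-or-value string that spends $\log(\card{\mathcal{Y}}+1)$ bits on every query, right or wrong, can never beat the trivial count. Your bound does match the displayed expression in the statement literally, but that expression is itself vacuous. It is also not what the paper's proof establishes, nor what the applications use. In \Cref{th lb selective copying}, $G(u)=u$ with $m=q=N$ and $\mathcal{Y}=\cV$, so $m\log\card{\cV}-q\log\card{\mathcal{Y}}=0$ exactly. The paper's argument gives $\log\card{\mathcal{S}}\ge m\log\card{\cV}-q\bigl(H_2(\err)+\err\log\card{\mathcal{Y}}\bigr)$, instantiated with $\err<1/8$. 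The $\log\card{\mathcal{Y}}$ term is scaled by the error rate, and that scaling is what makes the bound meaningful.

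\paragraph{The missing idea.} You need to pay for corrections only on the queries the model gets wrong; this is exactly where the success guarantee must enter. The paper does this with entropy. First, $\log\card{\mathcal{S}}\ge H(s)=H(u)-H(u\mid s)$. Then $H(u\mid s)=H(Y\mid s)\le\sum_i H(Y_i\mid s)$ by injectivity of $G$. Fano's inequality, applied to the estimator $g_i(s)$, gives $H(Y_i\mid s)\le H_2(\err_i)+\err_i\log\card{\mathcal{Y}}$, and concavity finishes the argument.

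Your counting route can be repaired in the same spirit. Besides $\sigma(u)$, record only the set $E(u)\subseteq[q]$ of wrongly answered queries (at most $q$ bits) and the true values $F(u,v^{(j)})$ for $j\in E(u)$. If the overall error is $\err$ and $\epsilon'>\err$, Markov's inequality shows that at least a $1-\err/\epsilon'$ fraction of the $u$ satisfy $\card{E(u)}\le\epsilon' q$. Injectivity on those $u$ then yields $S\ge m\log\card{\cV}-q-\epsilon' q\log\card{\mathcal{Y}}-\log\frac{\epsilon'}{\epsilon'-\err}$. With $\epsilon'<1$ (e.g.\ $\err=1/2$, $\epsilon'=3/4$) this is a genuine $\Omega(m\log\card{\cV})$ bound in the applications, once $\log\card{\mathcal{Y}}$ exceeds a constant depending on $\epsilon'$.

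Your Markov fallback as written does not achieve this. It reuses the same flag-or-value string, so it still pays $q\log(\card{\mathcal{Y}}+1)$ and stays vacuous.
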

\begin{remark}
    The distribution $D$ considered in \Cref{th SSM lb general} is defined over $(u,v)$ instead of the actual distribution over the input context $\vec{\x}$. For concrete tasks that satisfy \Cref{asp ssm lb}, we construct distributions over $\vec{\x}$ to simulate $D$.
\end{remark}

To prove \Cref{th SSM lb general}, we first prove a structural result for a pure multi-layer SSM. Roughly speaking, if a model is a sequence of multiple layers of state space models, then we can view them as single-layer SSMs. We defer the proof of \Cref{lm group ssm} to \Cref{app proof group ssm}.

\begin{lemma}\label{lm group ssm}
Consider a $k$-layer model $M$ defined that is a composition of $k$ state space layers $\SSM_j, j \in [k]$. There is a model $\SSM'$ that only consists of a single layer SSM, which behaves the same as $M$. In particular, denote the state space of $\SSM_j$ as $\cS_j$, $j \in [k]$, and denote by the state space of $\SSM'$ as $\cS'$, then $\card{\cS'} \le \prod_{j=1}^k \card{\cS_j}$.
\end{lemma}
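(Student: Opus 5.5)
The plan is a product-state construction by induction on $k$. It suffices to handle $k=2$: once two consecutive layers are merged into one layer with $\card{\cS_1}\card{\cS_2}$ states, we repeat the merge. One technicality comes first. In the GSSM formalism each layer maps $\cV$-valued input tokens to $\cV$-valued outputs via $r$. Hence the output sequence of $\SSM_1$, namely $r_1(s^{(1)}_1), r_1(s^{(1)}_2),\dots$, is a valid input token sequence for $\SSM_2$. The composition is therefore well-typed. If the layers use different vocabularies $\cV_j$, we simply let $u_j$ act on $\cV_{j-1}$; the argument is unchanged.

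Given $\SSM_1=(u_1,r_1,s^{(1)}_0)$ on $\cS_1$ and $\SSM_2=(u_2,r_2,s^{(2)}_0)$ on $\cS_2$, we define $\SSM'$ as follows. Its state space is $\cS'=\cS_1\times\cS_2$ and its initial state is $(s^{(1)}_0,s^{(2)}_0)$. Its update rule is
\[
u'\bigl((a,b),\x\bigr) = \bigl(u_1(a,\x),\; u_2(b, r_1(u_1(a,\x)))\bigr),
\]
and its output is $r'((a,b)) = r_2(b)$. The key point is that $\SSM_1$'s output at position $i$ depends only on its state after reading $\x_i$. That state is available inside the same update step, so the second coordinate can be advanced on exactly the token that $\SSM_2$ would have received.

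The correctness claim, proved by induction on $i$, is that after reading $\x_1,\dots,\x_i$ the state of $\SSM'$ equals $(S^{(1)}_i(\x_{1:i}),\,S^{(2)}_i(y_{1:i}))$, where $y_t=R^{(1)}_t(\x_{1:t})$. The base case $i=0$ holds by the choice of initial state. The inductive step unfolds the definition of $u'$ together with the recursive definitions $S_i=u(S_{i-1},\x_i)$ and $R_i=r(S_i)$. It follows that $R'_i(\x_{1:i}) = r_2(S^{(2)}_i(y_{1:i}))$, which is precisely the $i$-th output of $\SSM_2\circ\SSM_1$. Moreover $\card{\cS'}=\card{\cS_1}\card{\cS_2}$.

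For general $k$, apply the construction iteratively: merge $\SSM_1,\SSM_2$, then merge the result with $\SSM_3$, and so on. This gives $\card{\cS'}\le\prod_{j=1}^k\card{\cS_j}$, and taking logarithms yields additivity of memory, which is the form used in \Cref{th SSM lb general}. The only real obstacle is the bookkeeping of the timing. We must check that the GSSM output is a function of the current (post-update) state, not the previous one, so that no one-step lag appears between layers. Since $R_i = r(S_i)$ by definition, the composed update above exactly respects this convention.
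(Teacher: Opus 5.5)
Your proposal is correct and follows essentially the same route as the paper. The paper also takes $\cS'$ to be the product $\cS_1\times\cdots\times\cS_k$, carries the tuple of all layer states, and outputs via the last layer's $r$; it merges all $k$ layers at once rather than by pairwise induction, and your explicit update $u'((a,b),\x)=(u_1(a,\x),\,u_2(b,r_1(u_1(a,\x))))$ makes precise the same-step timing that the paper's definition of $u'$ leaves implicit.
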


Given \Cref{lm group ssm} and Yao's min-max principle, we only need to consider deterministic single-layer SSMs. The main technical difficulty of the proof is that, unlike in \citet{jelassi2024repeat,zhan2025overcoming}, which prove hardness against specific tasks, we have little knowledge of the structure of $F$ and cannot compute the probability of failure directly. 
We use information theoretic arguments:  
at a high level, for a fixed sample prefix and $m$ different random control parameters $v$, there need to be $\Omega(m \log |\cV|)$ bits to store all of the necessary information from the prefix to use $v$ correctly.
The full proof of \Cref{th SSM lb general} is in \Cref{app proof SSM lb general}.

\subsection{Limitations of Transformers}
Next we study the limitations of using a Transformer to solve this problem under a memory constraint. We consider \emph{sliding window attention}, a dominant design choice for large context models. The size of the sliding window characterizes the working memory of a Transformer based model. Our hardness result is developed in \Cref{ass lb Transformer} and \Cref{th Transformer lb}. Roughly speaking, when the underlying function $F$ is \emph{locally sensitive}, which implies predicting a token at a position requires information very far from the current position, any pure Transformer model must either be very deep or have one layer that is very dense. We remark that \Cref{ass lb Transformer} is very natural. In the context of function composition, $F(u,v)$, though the length of the essential context $u,v$ may be small, the control parameter $v$ might be sensitive and depend on a long range of the input context, making using a standard Transformer costly. For example, if $v(\vec{x})$ is the last token in the sequence that satisfies some property, then any Transformer must maintain a very long window size in order to compute the control parameter.
The proof of \Cref{th Transformer lb} is in \Cref{app proof transform lb}.

\begin{assumption}\label{ass lb Transformer}
    Let $F$ be a function that satisfies \Cref{def function composition}. We say the function $F(u(\vec{\x}),v(\vec{\x}))$ is hard to compute by a Transformer if it is $R$-local sensitive. That is, there are two sequences $\vec{\x},\vec{\x'}$ such that $\vec{\x}_{L-R+1:L} = \vec{\x'}_{L-R+1:L}$ but $F(\vec{\x}) \neq F(\vec{\x'})$.
\end{assumption}

\begin{theorem}\label{th Transformer lb}
    Let $F$ be a function that satisfies \Cref{ass lb Transformer}. There is a distribution $D$ over the input such that any model $M$ that is a composition of $k$ Transformer layers $\TF_1,\dots,\TF_k$ that can compute $F$ with probability $2/3$ must satisfy $\sum_{i=1}^k W_i \ge R$.
\end{theorem}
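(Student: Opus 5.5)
We argue by a receptive-field (dependency-cone) argument. The key structural fact is that a sliding-window attention layer with window $W_i$ computes output position $j$ from inputs at positions $j-W_i+1,\dots,j$ only (causal window), and the MLP acts position-wise.

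\textbf{Step 1: receptive field of the stacked model.} By induction on the number of layers, we show that the output of $\TF_k\circ\cdots\circ\TF_1$ at position $L$ depends only on the input tokens $\x_{L-\sum_{i=1}^k W_i+1:L}$.
\begin{itemize}
\item For one layer this is immediate from the definition of windowed attention; the position-wise MLP does not enlarge the dependency set.
\item For the inductive step, suppose position $j$ after layer $t$ depends on input positions $[\,j-\sum_{i\le t}W_i+1,\, j\,]$. Layer $t+1$ reads positions $j-W_{t+1}+1,\dots,j$ from layer $t$, so the union of their dependency sets is contained in $[\,j-\sum_{i\le t+1}W_i+1,\, j\,]$. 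The $-1$ offsets telescope, and the bound is at worst slightly loose, which is harmless.
\end{itemize}
Since the model's prediction is read off the last position, $M(\vec{\x})$ is a deterministic function of the last $\sum_i W_i$ tokens.

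\textbf{Step 2: the hard distribution.} Let $\vec{\x},\vec{\x'}$ be the pair guaranteed by \Cref{ass lb Transformer}: they agree on the last $R$ tokens but $F(\vec{\x})\neq F(\vec{\x'})$. Take $D$ uniform on $\{\vec{\x},\vec{\x'}\}$.

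Suppose, for contradiction, that $\sum_i W_i < R$, i.e.\ $\sum_i W_i\le R-1$. Then the two inputs agree on the last $\sum_i W_i$ positions, so by Step 1 $M(\vec{\x})=M(\vec{\x'})$. Because the targets differ, $M$ is wrong on at least one of the two inputs.

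\begin{itemize}
\item If $M$ is deterministic, its success probability under $D$ is at most $1/2<2/3$, a contradiction.
\item If $M$ is randomized, the output distributions on the two inputs are identical, say $P$. Then $P(F(\vec{\x}))+P(F(\vec{\x'}))\le 1$, so the average success probability is at most $1/2$. This is again a contradiction, whether success is required under $D$ or on each input. Alternatively, Yao's principle reduces this case to the deterministic one.
\end{itemize}
Hence $\sum_i W_i\ge R$.

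\textbf{Main obstacle.} The only delicate point is Step 1's bookkeeping: we must fix conventions for windows (causal, a window of size $W$ covering $W$ positions, including the current one) and handle boundary effects near the start of the sequence. Whatever convention is chosen, the dependency set has size at most $\sum_i W_i$, possibly up to an additive $k$. If the paper's convention yields $\sum_i(W_i-1)+1$, the bound only becomes stronger. So we fix the convention in the statement of Step 1 and verify the induction carefully there.
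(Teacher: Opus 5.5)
Your proposal is correct and is essentially the paper's proof. The output at position $L$ is a deterministic function of the last $\sum_{i=1}^k W_i$ tokens, so taking $D$ uniform on the two inputs from \Cref{ass lb Transformer} caps the success probability at $1/2<2/3$. The paper only asserts the receptive-field fact and does not treat randomized models; you spell out both, and your induction in fact gives the slightly sharper window size $\sum_i W_i - k + 1$.
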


\begin{figure}
    \centering
    \includegraphics[width=0.6\linewidth]{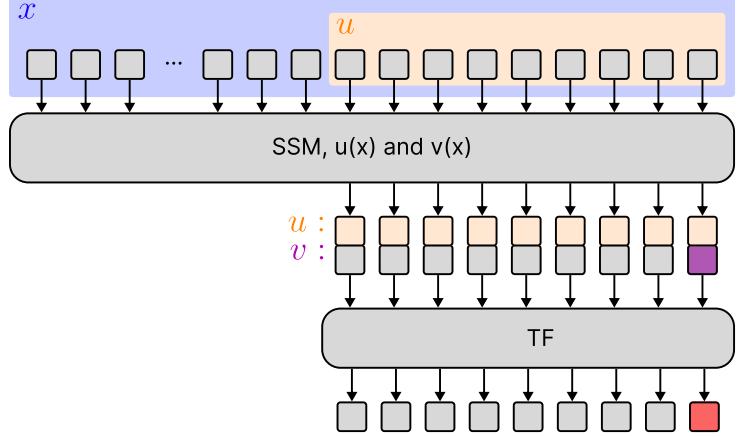}
    \caption{The construction's style follows taking an input $x$ and implementing 2 functions $u, v$ with an SSM. Typically, $u$ is a truncation of the input, and $v$ is a control parameter (represented in purple). Lastly, a Transformer combines these by implementing $F$ to perform the complete task (represented in red).}
    \label{fig:general_construction}
\end{figure}

\section{Hybrid Model for Function Composition}\label{sec merge}
Using our hardness results, if a function $F$ satisfies \Cref{asp ssm lb} and \Cref{ass lb Transformer} simultaneously, then any pure SSM must need a large number of parameters to solve the problem, making a key bottleneck for training and deploying the model, while any pure Transformer must need a working memory nearly in $\Omega(L)$ due to data redundant, making inference over long-context data a key bottleneck.

To achieve the best of both worlds, i.e., a model with a small number of parameters and relatively small working memory, we consider hybrid models that combine state-space and Transformer layers. The intuitive motivation for hybrid use is that \emph{a state space model can implicitly act as an encoder that summarizes the information from the long context $\vec{\x}$ and passes the compressed information to a Transformer}. Since the Transformer itself does not have the ability to select the correct positions to look at unless it has a deep depth or a large window size, using more comprehensive information can reduce the space requirement of the Transformer. 

To avoid making the notation messy, we think of $\vec{v}(\vec{\x})$ as a single token so that given $(\vec{u},\vec{v})$, a Transformer solves the problem by answering a query $q_v$. Suppose we have a state space model $\SSM_u$ parameterized by $(S^u,R^u)$ that maps $\vec{\x} \in \cV^L$ to a sequence $\vec{a} \in \cV^L$ such that $\vec{a}_{L-m+1:L} = \vec{u}(\vec{\x})$ and an encoder based model $\SSM_v$ parameterized by $S^v,R^v$ that maps $\vec{\x} \in \cV^L$ to a sequence $\vec{b} \in \cV^L$ such that $\vec{b}_L = \vec{v}(\vec{\x})$. We can build a merged $\SSM$ by combining the two state space models in a black box way. That is,  
\begin{align*}
    & S(\x_1,\dots,\x_i) = (S^u (\x_1,\dots,\x_i), S^v (\x_1,\dots,\x_i)), \\
    & R(S(\x_1,\dots,\x_i)) = \begin{pmatrix}
        R^u(S^u (\x_1,\dots,\x_i)) \\
        R^v(S^v (\x_1,\dots,\x_i)
    \end{pmatrix}.
\end{align*}
In matrix form, $\SSM$ maps $\vec{\x}$ to the following matrix
\begin{align*}
    \begin{pmatrix}
        r^u_1 & r^u_2 &,\dots, & r^u_L \\
        r^v_1 & r^v_2 &,\dots, & r^v_L
    \end{pmatrix}.
\end{align*}
In particular, if we look at the last $m$ columns of the output of the state-space model, then we have
\begin{align*}
    \begin{pmatrix}
        u_1 & u_2 &,\dots, & u_m \\
        r^v_{L-m+1} & r^v_{L-m+2} &,\dots, & v
    \end{pmatrix}.
\end{align*}
Suppose we have a Transformer $\TF$ parameterized by $(\W_q^{\TF},\W_k^{\TF},\W_v^{\TF})$ such that given $(\vec{u},\vec{v})$, it can compute $F(\vec{u},\vec{v})$. We consider the following attention layer $\W_q(r^u_i, r^v_i)^\top = \W_q^{\TF}r^v_i, \W_k(r^u_i, r^v_i)^\top = \W_k^{\TF}r^u_i, \W_v(r^u_i, r^v_i)^\top = \W_v^{\TF}r^u_i$.
The output of $\TF\circ \SSM$ is exactly $F(\vec{u},\vec{v})$. Furthermore, such a construction \emph{preserves the model size and the working memory of both state space models and Transformers}. We next use this idea to show that for several natural synthetic tasks, we can construct small scale hybrid models achieving good working memory efficiency.

\noindent \textbf{Selective Copying.}
Our first task is defined as:

\begin{definition}[Selective Copying]
Consider a vocabulary of tokens $\cV = \cN \cup \cM$, where $\cN = \{\#1,\#2,\dots,\#N\}$, $|\cN| = N$ and $|\cM| = M$. The other values in $\cM$ are arbitrary. Provided some sequence $(\x_i)_{i=1}^L$, let $i = \argmax_{1 \leq i \leq L} \x_i \in \cN$,
the goal is to extract the token $\x_{L+1-\x_{i}}$, i.e.
$\x_{L+1} = \x_{L+1-\x_{i}}.$ 
\end{definition}

As a direct application of \Cref{th SSM lb general} and \Cref{th Transformer lb}, the size of a pure SSM that can solve the selective copying task well must scale linearly with respect to $\card{\cN}$, while a pure Transformer must have working memory that scales linearly in the context length $L$. The proof is in \Cref{app proof lb copy}.

\begin{theorem}\label{th lb selective copying}
Consider the task of selective copying. There is a distribution $D$ over $\cV^L$ such that any pure state space model that can solve the task for some $\vec{\x}$ drawn from $D$ with probability $90\%$ must have $\sum_{i=1}^k\log(\card{\mathcal{S}_i}) \ge N \log M$, where $\mathcal{S}_i$ is the state space of the $i$th SSM layer. Furthermore, any pure Transformer model that can solve the task with probability $90\%$ must have $\sum_{i=1}^kW_i \ge \Omega(L)$.
\end{theorem}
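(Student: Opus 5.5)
We prove the two parts separately. Each is an instantiation of the general hardness results: \Cref{th SSM lb general} for the SSM bound and \Cref{th Transformer lb} for the Transformer bound. For each part we cast selective copying as a function composition $F(u,v)$ and build an input distribution over $\cV^L$ that realizes the abstract distribution $D$.

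\textbf{SSM part.} Set $m=N$ and let $u(\vec{\x})=\vec{\x}_{L-N+1:L}$ be the last $N$ tokens. Let $v(\vec{\x})$ be the last token of $\vec{\x}$ lying in $\cN$, and set $F(u,\#j)=u_{N+1-j}$, which equals $\x_{L+1-j}$. Take $Q=\{\#1,\dots,\#N\}$, so $q=N$. Then $G(u)=(u_N,u_{N-1},\dots,u_1)$, which is a bijection on $\cM^N$, and \Cref{asp ssm lb} holds when $u$ ranges over $\cM^N$.

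There is a subtlety. The control token must appear in the sequence and must be the last $\cN$-token, but it must also not sit inside the copied window. The plan is to place the query immediately after the $\cM$-block. Concretely, let $\vec{\x}$ be a uniformly random block $u\in\cM^N$ followed by a single uniformly random $\#j\in\cN$. Shift indexing by one so that the target is the $j$-th token counted back from the query. (Equivalently, pad with $\cM$-tokens after the query; this is harmless for the hard distribution.)

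The key step is to rerun the information-theoretic argument behind \Cref{th SSM lb general} directly rather than quote it. Its bound $m\log|\cV|-q\log|\mathcal{Y}|$ is vacuous here, since $q\log|\mathcal{Y}|\approx N\log|\cV|$. The argument goes as follows.
\begin{enumerate}
\item By \Cref{lm group ssm}, collapse the $k$ layers into one SSM with $\log|\cS'|\le\sum_i\log|\cS_i|$.
\item By Yao's principle, fix a deterministic model. The state $s$ reached after reading $u$ must determine the output for every query $j$, since the query is read after $s$.
\item By averaging, success probability $90\%$ means that for a large fraction of prefixes $u$, the map $j\mapsto$ output is correct on most $j$. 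So $s$ determines $u$ up to a small Hamming error.
\item By a counting (Fano-type) bound, $\log|\cS'|\ge H(u)-H(u\mid s)\ge N\log M-O(N)$ bits.
\end{enumerate}
Making the constant come out to exactly $N\log M$ is the main obstacle. I would try to get it by taking $M$ large relative to the error constant, or by accepting $\Omega(N\log M)$ as the intended meaning.

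\textbf{Transformer part.} Use a hard distribution in which the relevant $\cN$-token sits far back. Put a random $\#j$ at position $1$ (or around position $L/2$). Fill every other position with tokens from $\cM$ chosen so that the last $R\approx L-1$ tokens contain no $\cN$ token. Choose the $\cM$-tokens at positions $L+1-j$ to differ across different $j$; for example, make the tail an injective sequence when $M\ge N$.

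Two inputs that differ only in the control token $\#j$ versus $\#j'$ then agree on their last $L-1$ positions but have different answers. This shows the task is $R$-local sensitive with $R=\Omega(L)$. \Cref{th Transformer lb} then gives $\sum_i W_i\ge\Omega(L)$.

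To upgrade the success threshold from $2/3$ to $90\%$, the approach is as follows. Take the control token uniform over $\cN$ with $N\ge2$. Any model whose combined receptive field $\sum_i W_i<R$ cannot see the control token, so its output is independent of $j$ and it succeeds with probability at most $1/N\le1/2<90\%$.
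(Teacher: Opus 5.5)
Your route is the paper's: the same decomposition ($u$ the last $N$ tokens, $v$ the last $\cN$-token, $Q=\cN$), the same collapse/Yao/Fano argument on the state reached after the block, and the same locality argument for sliding windows. You are also right that the bound as \emph{stated} in \Cref{th SSM lb general} is vacuous here; the paper in effect uses the bound $m\log\card{\cV}-q(H_2(1/8)+\log\card{\mathcal{Y}}/8)$ derived inside that proof. Two steps, however, fail as written.

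The first is the hard SSM distribution. Shifting the indexing proves a bound for a different task. Padding with $\cM$-tokens \emph{after} the query is not harmless either. Since $D$ must live on $\cV^L$, once $L>2N$ every target position $L+1-j$ with $j\le N$ lies inside the padding, which is read after $\#j$. An SSM that stores $j$, counts positions, and keeps only the token at position $L+1-j$ then succeeds with $O(\log N+\log L+\log M)$ bits, so no lower bound survives. The fix, which is the paper's, is to pad \emph{before} the block, keep the query at position $L$, and draw $j$ from $\{2,\dots,N\}$; for $j=1$ the task just returns the query token. The state must then encode the $N-1$ symbols $\x_{L-1},\dots,\x_{L+1-N}$.

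The second is the bookkeeping in your step 4. At error rate $\varepsilon$, Fano (or your Hamming-ball count) gives $H(u\mid s)\le NH_2(\varepsilon)+\varepsilon N\log M$. That yields $(1-\varepsilon)N\log M-O(N)$, not $N\log M-O(N)$. Taking $M$ large cannot restore the exact constant: a shift-register SSM that keeps only the most recent $\lceil 0.9(N-1)\rceil$ symbols succeeds with probability at least $0.9$ using about $0.9N\log M$ bits. So on this distribution only $\Omega(N\log M)$ is provable, which is also all the paper's proof establishes.

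The statement also asks for a single $D$; take $D=\tfrac12D_S+\tfrac12D_T$ as the paper does. Success $90\%$ on $D$ forces at least $80\%$ on $D_S$, which still yields $\Omega(N\log M)$ once $M$ exceeds an absolute constant. Meanwhile, by your $1/N$ argument, a Transformer with $\sum_iW_i<R$ succeeds on $D$ with probability at most $\tfrac12+\tfrac1{2N}\le\tfrac34$ for $N\ge 2$. Your $2/3\to90\%$ upgrade is correct but unnecessary, since $90\%$ success already implies $2/3$ success.
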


\begin{figure}
    \centering
    \includegraphics[width=0.6\linewidth]{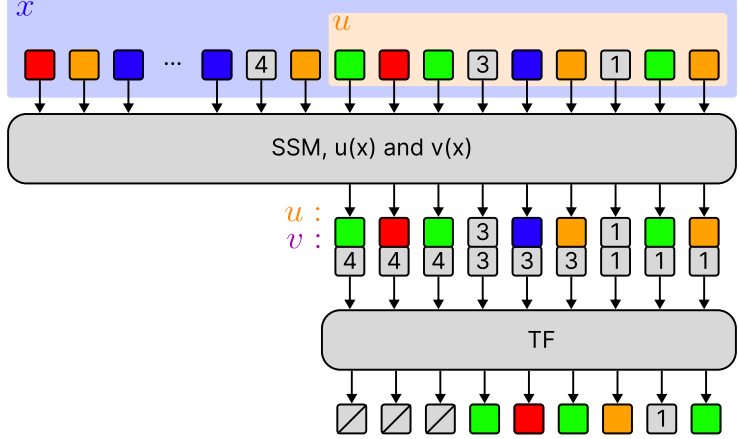}
    \caption{The construction solving selective copy takes an input sequence and finds the most recent number token (as represented in the bottom squares of the output of the SSM). The Transformer can then use these to look back some relative distance to find the correct token to output.}
    \label{fig:var_copy_construction}
\end{figure}

\noindent \textbf{Hybrid Model for Selective Copying}

To sidestep the limitations of pure models, we \textbf{design a two-layer hybrid that provably solves the problem with small input-independent memory \emph{and} input-dependent memory}. Our construction follows the discussion in \Cref{sec merge}. In particular, the SSM component can be realized by a Mamba model.
\begin{theorem}\label{th hybrid copy}
There is a two-layer hybrid with a Mamba layer and an attention layer that can solve the selective copying task for \emph{every} input sequence $\vec{\x} \in \cV^L$. Furthermore, the hybrid model has an embedding dimension $d = O( \max(\log \card{\cV},\log L))$ such that the Mamba layer has $O(\card{\cV})$ state space, while the attention layer has dimension $d$ and a sliding window size of $O(N)$.
\end{theorem}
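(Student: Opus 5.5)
We instantiate the generic recipe of \Cref{sec merge} with $u(\vec{\x})$ taken to be the raw input itself and $v(\vec{\x})$ the most recent number token $\x_{i^*}$, where $i^* = \argmax\{i : \x_i \in \cN\}$. The target is $\x_{L+1-\x_{i^*}}$, which lies within distance at most $N$ of the last position. The proof has three parts. First, we build a Mamba layer that tracks the value of the most recent number token. Second, we fix an embedding that carries both the token identity and its position. Third, we build a single attention head that looks back exactly $\x_{i^*}$ positions.

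\textbf{Step 1: the Mamba layer.} The layer should behave as a selective ``latch''. The state space is $\cN \cup \{\bot\}$, so it has at most $O(\card{\cV})$ states, and the update rule is
\[
u(s,\x) = \x \text{ if } \x \in \cN, \qquad u(s,\x) = s \text{ otherwise}.
\]
To realize this with Mamba's input-dependent discretization, we take a one-hot (or $O(\log\card{\cV})$-dimensional) encoding of the state. The gate $\Delta(\x)$ is chosen very large when $\x\in\cN$, so that $\bar A = \exp(\Delta A)\approx 0$ and $\bar B\x$ overwrites the state with the encoding of $\x$. When $\x\in\cM$, we set $\Delta(\x)=0$, so that $\bar A = I$ and $\bar B = 0$ and the state is copied unchanged.

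The point we must be careful about is exactness. We will use the selection mechanism with saturating values, for instance $\Delta$ made large enough that the residual error falls below a fixed rounding threshold. Alternatively, we can appeal to the GSSM abstraction, in which the state space is finite. After the layer we keep a residual copy of the input, so each position $j$ carries the pair $(\x_j, r^v_j)$, where $r^v_j$ is the last number token seen up to position $j$. In particular, $r^v_L = \x_{i^*}$, which is exactly the matrix form displayed in \Cref{sec merge}.

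\textbf{Step 2: the embedding and the attention layer.} The embedding has dimension $d=O(\log\card{\cV}+\log L)$. It holds a token code for $\x_j$ together with a positional code $p_j$, which we take to be sinusoidal/rotary, i.e. points $(\cos\theta j,\sin\theta j)$ over $O(\log L)$ frequencies.

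The attention must implement the lookup $j \mapsto j-(k-1)$ with $k = r^v_L$. We use the rotation property of these positional codes. The query at the last position $L$ is $R_{k-1}^{-1}p_L$, that is, the positional code rotated back by $k-1$. This rotation depends linearly on the one-hot encoding of $k$, because $\W_q$ can be applied to the concatenation of the one-hot $k$ and $p_L$; we may also use a bilinear feature produced by an MLP before the attention. The key at position $j$ is $p_j$. The inner product $\langle R_{k-1}^{-1}p_L, p_j\rangle$ is then uniquely maximized at $j=L+1-k$. We scale $\W_q$ by a factor $\beta$ so that the softmax concentrates on this position, with the gap bounded away from zero because only $O(N)$ candidate positions lie in the window. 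Finally, the value matrix $\W_v$ reads off the token code of $\x_j$.

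\textbf{Step 3: window and finishing.} Since $k\le N$, the attended position lies within the last $N$ positions, so a sliding window of size $O(N)$ suffices. Rounding the output, for example with an MLP or argmax readout, recovers $\x_{L+1-k}$ exactly, and this holds for \emph{every} input.

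\textbf{Main obstacle.} We expect the main obstacle to be Step 2: realizing an input-dependent relative shift with a single softmax head of dimension only $O(\log L)$, while proving exactness with a finite temperature. Our first approach will be to restrict positions to the window and use the one-hot encoding of $k\in[N]$ directly. Concretely, we encode position $j$ relative to $L$ modulo $N+1$ as a one-hot vector inside the window, and set the query equal to the one-hot vector of $k-1$. This gives an exact unique match with a constant score gap. If that encoding proves too costly in dimension, we will fall back on the rotary construction with an explicit margin bound.
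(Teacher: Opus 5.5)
\textbf{Verdict: genuine gap, but fixable.} Your architecture matches the paper's. The Mamba layer is a latch that overwrites its state on number tokens, with $\Delta(\x)=\Ind_{\{\x\in\cN\}}$. The paper uses the first-order discretization $I-\Delta\W_A$ with $\W_A=I$, which makes the overwrite exact rather than approximate. This is followed by one attention head whose query is the latched value and whose key is a positional code, with a window of size $N$.

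\textbf{The gap.} It lies in the quantitative claim $d=O(\max(\log\card{\cV},\log L))$, which you identify as the main obstacle but leave unresolved. Both routes you commit to put a one-hot vector of length $\Theta(N)$ into the residual stream:
\begin{itemize}
\item In the rotary route, linearity of $k\mapsto R_{k-1}^{-1}p_L=p_{L+1-k}$ comes from the one-hot of $k$.
\item In the fallback, the positional code is the one-hot of $(L-j)\bmod (N+1)$.
\end{itemize}
Since $N$ can be of order $\card{\cV}$, this gives $d=\Theta(N+\log L)$, not $O(\log\card{\cV}+\log L)$. Switching to a compact code of $k$ does not rescue the rotary route as stated. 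Unless the code of $k$ is chosen to be $p_{L+1-k}$ itself, the map from that code to $p_{L+1-k}$ is in general not affine, so a linear $\W_q$ cannot produce it. An MLP lookup table would restore width $\Theta(N)$.

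\textbf{The missing idea.} Choose the embeddings so that the latch deposits the key of the target position directly.
\begin{itemize}
\item Index positions from the end: the positional block of position $i$ is the $\pm1$ binary code of $L+1-i$.
\item Give each number token $\#k$ a block containing the $\pm1$ binary code of $k$, and zero for tokens in $\cM$.
\item The latch then stores exactly the positional code of position $L+1-k$.
\item $\W_q$ is $M$ times the projection onto that block, and $\W_k$ is the projection onto the positional block.
\end{itemize}
Two distinct $\pm1$ codes of length $\ell$ have inner product at most $\ell-2$. So within the window the softmax places total weight at most $(N-1)e^{-2M}$ away from position $L+1-k$. A finite $M$ followed by rounding then recovers $\x_{L+1-k}$ exactly, in dimension $O(\log\card{\cV}+\log L)$. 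This is your ``positions relative to $L$'' fallback with binary codes in place of one-hot codes.
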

The proof of \Cref{th hybrid copy} is in \Cref{app proof hybrid copy}. We remark that the number of parameters of our hybrid model is only $\poly\log(\max(\card{\cV},L))$ with working memory $\Tilde{O}(N)$, which is much smaller than $L$, unless $N$ is extremely large. The construction follows a structure where the SSM stores the most recent number token into its state, adding it to the current token. The Transformer can then use this information to copy the token that many positions in the past (Fig.  \ref{fig:var_copy_construction}).

\noindent \textbf{Associative Recall with Decoding}

As a further concrete application of our framework, we introduce another task, for which a hybrid model outperforms both pure state space models and Transformers.
\begin{definition}[Associative Recall with Decoding]
    Consider a vocabulary of tokens $\cV = \cM \cup \{0,1\}$, where $\cM$ is a set of word tokens. Let $\vec{\x} \in \cV^L$ be a sequence of input tokens and let $v(\vec{x}) \in \{0,1\}^{\log(\card{\cV})}$ be the $0-1$ subsequence of $\vec{\x}$. Denote by $\Phi(\vec{\x}) \in \cM$, the token with binary representation $\vec{\x}$.
    Given $v(\vec{\x})$, the goal is to output the next token in $\vec{\x}$ behind the last $\Phi(v(\vec{\x}))$ token. 
\end{definition}

As another implication of \Cref{th SSM lb general} and \Cref{th Transformer lb}, the size of a pure SSM that can solve the associative recall with decoding task well must scale linearly with respect to the number of all possible word tokens, while a pure Transformer must have working memory that scales linearly in the context length $L$. 
The proof of \Cref{th lb associate recall} is in \Cref{app proof lb recall}.

\begin{theorem}\label{th lb associate recall}
    Consider the task of associate recall with decoding. 
    There is a distribution $D$ over $\cV^L$ such that any pure state space model that can solve the task for some $\vec{\x}$ drawn from $D$ with probability $90\%$ must have $\sum_{i=1}^k\log(\card{\mathcal{S}_i}) \ge \Omega (W \log W)$, where $\mathcal{S}_i$ is the state space of the $i$th SSM layer and $W = \card{\cM}$. Any pure Transformer model that can solve the task with probability $90\%$ must have $\sum_{i=1}^kW_i \ge \Omega(L)$.
\end{theorem}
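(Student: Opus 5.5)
The plan is to derive both halves from \Cref{th SSM lb general} and \Cref{th Transformer lb}. For each half we build a distribution over $\cV^L$ that realizes the abstract distribution over $(u,v)$, and check the relevant assumption.

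\textbf{SSM bound.} Let $W=\card{\cM}$, $b=\lceil\log W\rceil$, and put the binary encoding of a query word at the end of the sequence. The memory $u(\vec{\x})$ is a list of $W$ (key, value) pairs $(k_1,w_1),\dots,(k_W,w_W)$ written as consecutive word tokens. Take the keys to be a uniformly random permutation of $\cM$ and the values $w_j$ uniform in $\cM$, so $m=2W$. The control $v(\vec{\x})\in\{0,1\}^b$ is the only $0$--$1$ subsequence, and it occupies the final $b$ positions.

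\begin{itemize}
\item To check \Cref{asp ssm lb}, let the query set $Q$ be the binary encodings of all $W$ words. The answers to these $W$ queries are exactly the values $(w_{\pi^{-1}(\cdot)})$ indexed by key.
\item This answer vector determines the key-to-value map. It does not determine the order of the pairs, so $G$ is not an injection on raw $u$. I will repair this by fixing a canonical order, for instance keys sorted, so that the randomness lies only in the values. Then $G$ is injective and $\log\card{\mathrm{supp}}=W\log W$.
\item The entropy count in the proof of \Cref{th SSM lb general} then gives $\Omega(W\log W)$ bits. I would rerun that argument directly rather than use its stated bound. The stated bound $m\log\card{\cV}-q\log\card{\mathcal Y}$ is vacuous here, since $q=W$ and $\card{\mathcal Y}=W$.
\item The core of that argument: by \Cref{lm group ssm} it suffices to treat a single SSM. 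Its state after the memory prefix must let a random query be answered with probability $0.9$. By Fano's inequality, the mutual information between the state and the value table is at least $(1-0.1-o(1))\,W\log W$, hence $\sum_i\log\card{\mathcal S_i}\ge\Omega(W\log W)$.
\end{itemize}

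\textbf{Transformer bound.} I apply \Cref{th Transformer lb} with $R=\Omega(L)$, so I need two inputs that agree on their last $R$ tokens but have different answers.

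\begin{itemize}
\item Place the single occurrence of the queried word together with its successor near the beginning of the sequence.
\item Fill the middle with word tokens that avoid the queried word, and end with the binary query.
\item Changing only the successor token changes the answer. The two sequences agree on their last $L-3$ tokens, so the task is $(L-3)$-local sensitive.
\item The probability $90\%$ requirement is met by taking $D$ uniform over the two sequences, or over a family of such pairs. A model correct with probability $0.9>2/3$ falls under \Cref{th Transformer lb}, giving $\sum_i W_i\ge\Omega(L)$.
\end{itemize}

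\textbf{Main obstacle.} The hard step is the SSM counting. The injectivity condition must be set up so the entropy is $W\log W$ and not swamped by the $q\log\card{\mathcal Y}$ term, which means the Fano-style argument has to be redone with a random query. Care is also needed so that the binary query tokens, which arrive after the memory, do not let the SSM cheat. They don't: the state at the end of the memory must already be consistent with every possible query suffix.
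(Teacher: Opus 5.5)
Your Transformer half is fine: your local-sensitivity witness moves the relevant key--value pair to the front, whereas the paper moves the query bits to the front, and both work. Rerunning the Fano count with a random query is also the right way to avoid the vacuous $q\log\card{\mathcal Y}$ term. The gap is in the SSM half, because you draw keys and values from the same alphabet $\cM$. The task returns the token after the \emph{last} occurrence of the queried word. So whenever some value $w_l$ with $l\ge j$ equals the key $k_j$, the answer to the query $k_j$ is $k_{l+1}$ (or the first query bit if $l=W$), not $w_j$. Under uniform values this happens for a constant fraction of the keys (asymptotically about $1/e$ of them). Your claim that the answer vector is the value table is therefore false, and $G$ is not injective even after sorting the keys. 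For $\cM=\{a,b\}$, the value tables $(w_1,w_2)=(a,a)$ and $(b,a)$ give memories $a\,a\,b\,a$ and $a\,b\,b\,a$, on which both queries receive identical answers. Consequently $H(G(u))<W\log W$, and the step ``mutual information at least $(0.9-o(1))W\log W$'' is unsupported. To keep your construction you would need a lower bound on the entropy of the true answer vector, which you do not provide.

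The missing idea is the paper's partition of $\cM$ into disjoint halves: $\cM_1$ holds the keys, each appearing exactly once in a fixed order, and $\cM_2$ holds the values, drawn i.i.d.\ uniform. Queries range only over $\cM_1$. Then every queried key occurs exactly once, its answer is its own value, and $G$ is the identity on $u\in\cM_2^{W/2}$. Your Fano computation then yields $\log\card{\mathcal S}\ge 0.9\cdot\frac W2\log\frac W2-O(W)=\Omega(W\log W)$. Finally, the statement asks for a single distribution $D$, so take the equal mixture of your two distributions, as the paper does. Success probability $0.9$ on the mixture forces success at least $0.8$ on each component. Both of your arguments go through with that constant: Fano with error $0.2$ for the SSM, and $0.8>1/2$ for the two-point argument.
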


We next show that for very natural distributions (including the one stated in \Cref{th lb associate recall}), the performance of a hybrid model can be much better than that of a pure model. 
In particular, we show that we can construct a hybrid model such that the number of parameters of the model scales with the logarithm of the task size. The intuition here is that for associative recall, the only tokens that matter tend to appear at the end of the sequence, which implies that the window used by the Transformer can be improved to much smaller than $L$. For example, if each token is sampled uniformly from the vocabulary, then with probability $99\%$, within a window of size $\Tilde{O}(\card{\cM})$, we can see all distinct tokens from the vocabulary. In fact, the hard distribution we considered in \Cref{th lb associate recall} also satisfies such a property.
Thus, once we use a state space model to extract the control variable $\vec{x}$, we are able to solve the problem with a small model with a small working memory. We defer the proof of \Cref{th hybrid recall} to \Cref{app proof hybrid recall}.

\begin{theorem}\label{th hybrid recall}
Consider the task of associative recall with decoding. There is a three-layer hybrid model that is a combination of a Mamba layer and two attention layers that can solve the selective copying task with probability $99\%$ for an input sequence $\vec{\x} \in \cV^L$ such that tokens in $\vec{\x}\setminus v(\x)$ are drawn from a uniform distribution. The hybrid model has an embedding dimension $d = O( \max(\log \card{\cV},\log L))$ such that the Mamba layer has $O(\card{\cV})$ state spaces, while the attention layer has dimension $d$ and a sliding window size of $\Tilde{O}(\card{\cV})$.    
\end{theorem}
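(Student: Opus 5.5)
We follow the template of \Cref{sec merge}: a Mamba layer extracts the control parameter $v(\vec{\x})$, the binary address of the queried word, and two sliding-window attention layers then locate the last occurrence of $\Phi(v(\vec{\x}))$ and copy the token after it.

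\textbf{Mamba layer.} The Mamba layer realizes an SSM whose state holds the $0$-$1$ subsequence seen so far, or equivalently the partially assembled index of $\Phi(v)$. At each step the state is updated by a selective (input-gated) rule: on a bit token, shift and append the bit; on a word token $w \in \cM$, leave the state unchanged. There are at most $O(\card{\cV})$ possible partial/complete codes of length $\log\card{\cV}$, so the state space has size $O(\card{\cV})$. The output at position $i$ is the token $\x_i$ concatenated with the current code $\Phi(v)$, written in the embedding as a $\pm 1$ binary vector of dimension $O(\log\card{\cV})$. We also append a positional code of dimension $O(\log L)$, for example $(\cos(\theta i),\sin(\theta i))$ pairs or binary position bits, so that $d=O(\max(\log\card{\cV},\log L))$.

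\textbf{First attention layer.} This layer finds the most recent occurrence of $\Phi(v)$ in the window.
\begin{itemize}
\item The query at the final position is the code of $\Phi(v)$ scaled by a large constant $\beta$.
\item The key at position $i$ is the $\pm1$ binary encoding of $\x_i$.
\item The inner product equals $\beta\log\card{\cV}$ exactly when $\x_i=\Phi(v)$, and is at most $\beta(\log\card{\cV}-2)$ otherwise.
\item A small recency bias $\gamma\cdot i$, realized through the positional coordinates, breaks ties among matches in favor of the last one.
\end{itemize}
Choosing $\beta \gg \gamma W$ with $\gamma$ large relative to $\log W$, where $W=\tilde O(\card{\cV})$ is the window length, makes softmax concentrate on the last match up to error $1/\poly$. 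The value retrieved is the position code of that match, $p^\*$. An MLP then rounds it and produces the code of $p^\*+1$.

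\textbf{Second attention layer.} This layer performs a positional lookup. The query is the code of $p^\*+1$ and the key is each position's code, so the layer retrieves $\x_{p^\*+1}$ as its value, which is decoded as the output. Both attention layers use dimension $d$ and window $\tilde O(\card{\cV})$.

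\textbf{Probability argument.} The remaining step is to show that with probability $99\%$ the last occurrence of $\Phi(v)$ lies inside the final window of length $W=C\card{\cV}\log\card{\cV}$. Since non-bit tokens are i.i.d.\ uniform, the probability that a fixed word is absent from $W$ consecutive uniform positions is $(1-1/\card{\cV})^{W}\le \card{\cV}^{-C}$. This handles the query word; a union bound over all words also works if the query is adversarial. The only care needed is that the few bit tokens interspersed in the window merely shift positions and do not break the count. The successor token $\x_{p^\*+1}$ is in the window whenever the match is not the very last token, which is also a low-probability event.

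\textbf{Main obstacle.} The main difficulty is making softmax attention select the unique last match exactly, i.e.\ hard selection, with $O(\log)$-dimensional embeddings. The plan is to use near-orthogonal binary codes together with the recency bias and a large scale parameter, bound the leakage mass explicitly, and then clean up the result with the MLP rounding step.
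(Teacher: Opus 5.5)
Your construction is correct. It shares the paper's Mamba layer (the shift-and-append register with $\W_A = I - S$ and $\Delta(\x)=\Ind_{\{\x\in\cB\}}$), but your attention stage is organized differently.

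The paper builds an induction head. Its first attention layer has two heads with hard positional masks: one copies $\psi'(\x_{i-1})$ into column $i$ and the other acts as the identity, so each column carries the pair $(\x_{i-1},\x_i)$. The second layer then does a single content match of the query code against the previous-token key $\psi'(\x_{i-1})$, with a recency bias, and returns $\psi'(\x_i)$ directly. This needs no positional arithmetic and aligns the successor with the match automatically. It also never selects a match at position $L$, since such a match has no successor column.

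Your route matches on $\x_i$ itself, returns its position code, rounds and increments it in an MLP, and then does a positional lookup. It uses only absolute position codes and no relative masks, but it costs you two things:
\begin{itemize}
\item You need an MLP implementing binary increment, which you should spell out: bit $j$ of $p+1$ is the exclusive-or of $b_j$ with the conjunction of the lower bits, realizable with $O(\log L)$ ReLUs. Alternatively, you can avoid the increment by also storing $\phi(i-1)$ at position $i$ and keying the second layer on it.
\item You must charge the event $\x_L=\Phi(v)$, of probability about $1/\card{\cM}$, to the $1\%$ failure budget.
\end{itemize}

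In exchange, your write-up is more explicit than the paper's at its two vaguest points. First, you quantify how the content scale $\beta$ must dominate the recency spread $\gamma W$ so that softmax isolates the last match; the paper only says a bias $B$ is added ``so that the argmax is achieved at the last recall token.'' Second, you justify the window $W=C\card{\cV}\log\card{\cV}$ by a coupon-collector bound, with a union bound for adversarial queries and a remark that the few bit tokens do not disturb the count.
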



\section{Experiments}\label{sec experiments}

Our theoretical results show fundamental expressivity differences between pure models and hybrid constructions. We empirically validate three claims related to these results:
\begin{itemize}[leftmargin=*, nosep]
\item \textbf{C1.} The construction for the hybrid model empirically outperforms the pure Transformer and pure SSM baselines, as predicted by our theoretical results,
\item \textbf{C2.} Under standard training approaches, a \emph{learned} (rather than constructed) hybrid \text{also outperforms} pure Transformer and pure SSM baselines, 
\item \textbf{C3.} In further and more realistic settings, including out-of-distribution and length generalization scenarios, learned hybrids continue to outperform pure Transformer and pure SSM baselines.
\end{itemize} 
Here, C1 verifies our basic theoretical claims, while C2 and C3 show that the benefits of hybrids persist in typical scenarios (e.g., the hybrid model is trained in standard ways, the tasks and settings deviate from the exact ones we studied). By validating these claims, {\textbf{\emph{we demonstrate that the fundamental benefits of hybrid models carry over to practical settings}}}---and are not just a theoretical curiosity.

\subsection{Construction Implementations}

Each of the constructions for \Cref{th hybrid copy} and \Cref{th hybrid recall} have been implemented to test their validity. Both perform their respective tasks on the last position in the context, as desired, validating C1. For details, see Appendix \ref{app constructions}.

\subsection{Learnability Experiments}
To validate C2, we conduct experiments designed to test the capacity of a hybrid to learn the two main tasks we studied, selective copying and associative recall with decoding. Besides these, we also empirically study two additional tasks: multi-key associative recall (MKAR) and needle-in-a-haystack (NH). These tasks also fall into the category of function composition and are standard tasks used to evaluate pure Transformer-based and SSM models. Our goal is to determine whether the hybrid models learned with standard training approaches (rather than explicitly constructed) continue to outperform non-hybrids. As we shall see, our findings confirm that this is the case. 

To simplify comparisons, rather than directly comparing the state sizes or the input-independent memory for the Transformers, we compare models with similar parameter counts, controlled through the embedding dimension of the tokens. Both the state size and the input-independent memory scale similarly as embedding dimension is increased. 

\noindent \textbf{Experiment Details.} These models are comprised of GPTNeoX and Mamba layers for attention and SSM layers. We use RoPE positional encodings. Unless otherwise specified, we are using windowed, causal attention, the transformers have a single head, and the Mamba state dimension expansion is 1. All experiments are trained to convergence using linearly decaying learning rates. The experiments are seq-to-seq, and accuracy if measured over all valid tokens. Transformers use a windowed attention mask.

When listed in figures, layers are read left-to-right. For example, SSM-TF is an $\SSM$ layer followed by a $\TF$ layer. Additional details can be found in Appendix \ref{app experiments}.

\noindent \textbf{Selective Copy.} 
\begin{figure}[t]
    \centering
    \includegraphics[width=0.6\linewidth]{fig/exps/var_copy/layers_dim.png}
    \small
    \begin{tabular}{c c c c c}
    \toprule
    Parameters & Pure TF & Pure SSM & TF$\to$SSM & \textit{SSM$\to$TF} \\
    \midrule
    $\sim1000$  & 0.056 & 0.084 & 0.100 & 0.087 \\
    $\sim2000$  & 0.352 & 0.305 & 0.433 & 0.999 \\
    $\sim6000$  & 0.727 & 0.485 & 0.822 & 1.000 \\
    $\sim12000$ & 0.923 & 0.931 & 0.908 & 1.000 \\
    \bottomrule
    \end{tabular}

    \caption{Results from training small models on Selective Copy, across an increase in the hidden dimension of the models. At 2000 parameters, hybrid models consistently attain perfect accuracy. The pure models, with 6x the parameters, only attain around 0.9 accuracy.}
    \label{fig:exp_var_copy}
    \vspace{-0.2cm}
\end{figure}
\noindent \emph{Setup.} Given the relative simplicity of this task, we can study model expressivity for small models, with up to approximately ten thousand parameters.  

\noindent \emph{Results.} We depict the results in \Cref{fig:exp_var_copy}. As expected, the hybrid performs the task to 90\% accuracy with significantly fewer parameters than either the pure Transformer or pure SSM, by around a factor of $6 \times$. 
Also note that the ``reverse'' hybrid, with the Transformer layer first, performs no differently than the pure models. This is consistent with prior work \cite{park2024can}.

\noindent {\bf Associative Recall with Decoding.} 
\begin{figure}[t]
    \centering
    \includegraphics[width=0.6\linewidth]{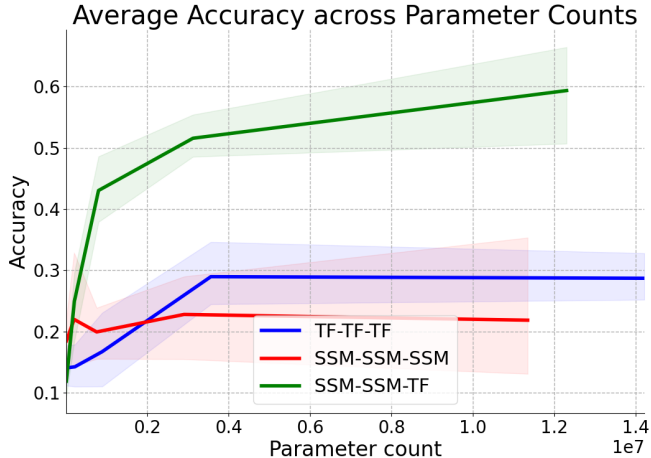}
    \caption{Results from training small models on Associative Recall with Decoding. Even at much smaller scales than the pure models, the hybrid is the only architecture that attains 0.5 accuracy. At the scales tested, none of the pure models performed the task with more than 0.4 accuracy.}
    \label{fig:exp_decode_recall}
\end{figure}
\emph{Setup.} For this task, we expect both the pure Transformer and the pure SSM to struggle, while a hybrid has the expressive power to represent this task. Unlike the other tasks, this experiment utilized three layer models rather than two layer ones. This is due to the construction for \Cref{th hybrid recall}, where three layers were needed. These models required significantly more parameters than the other models, close to 1 million. 

\emph{Results}. For these three layer models, we observe the same behavior: the hybrid excels while the pure models struggle. As seen in \Cref{fig:exp_decode_recall}, at the scales tested, none of the pure models achieved greater than 40\% accuracy, while the hybrid did at much smaller scales, eventually surpassing 50\% accuracy. At a high level, this task is more difficult than the others as it requires the more complex computation of binary values before performing the commonly tested task of associative recall.

\noindent \textbf{Multi-Key Associative Recall.}
\textit{Setup.} We now turn our attention to a further pair of tasks, the first being MKAR. 

\begin{figure}[t]
    \centering
    \includegraphics[width=0.6\linewidth]{fig/exps/assoc_recall_mk/layers_dim.png}
    \small
    \begin{tabular}{c c c c c}
\toprule
Parameters & Pure TF & Pure SSM & TF$\to$SSM & \textit{SSM$\to$TF} \\
\midrule
$\sim1000$  & 0.124 & 0.158 & 0.131 & 0.144 \\
$\sim2000$  & 0.159 & 0.173 & 0.183 & 0.512 \\
$\sim6000$  & 0.230 & 0.356 & 0.286 & 0.990 \\
$\sim12000$ & 0.668 & 0.517 & 0.524 & 0.989 \\
\bottomrule
\end{tabular}
    \caption{Results from training small models on Multi-Key Associative Recall, across an increase in the hidden dimension. The hybrid consistently outperforms the pure models of the same depth and similar parameter counts. The hybrid models could perform the task to 60\% accuracy with $6 \times$ fewer parameters than any of the pure Transformers.}
    \label{fig:exp_assoc_recall_mk}
\end{figure}

\begin{definition}[MKAR]
    Let $\vec{\x}$ be a sequence of length $L$ sampled from a vocabulary $\cV$, and let $k$ be some small number. Let $K = \vec{\x}_{L-k:L}$. Let $i$ be the last position in the context where $\vec{\x}_{i:i+k} = K$. Performing \textit{MKAR} is outputting $\vec{\x}_{i+k}$.
\end{definition}

In the framework of function-composition, we can take $v$ to be the empty map, $u$ to include enough context to find the key, and $F$ to be the look-up operation. Since $F$ can have any output depending heavily on the context, this is hard for an SSM. Since $u$ could be large, this is also hard for a Transformer. However, since $v$ is empty, function-composition does not immediately indicate if a separation exists between Transformers and hybrids.

\textit{Results.} In \Cref{fig:exp_assoc_recall_mk}, we see that SSMs perform quite poorly, while hybrids and Transformers can perform the task at scale. However, we see a similar separation between hybrids and Transformers present for selective copying. Specifically, hybrids perform the task on average with $6 \times$ fewer parameters than the pure Transformers to an accuracy of $60\%$.

\begin{figure}[t]
    \centering
    \includegraphics[width=0.6\linewidth]{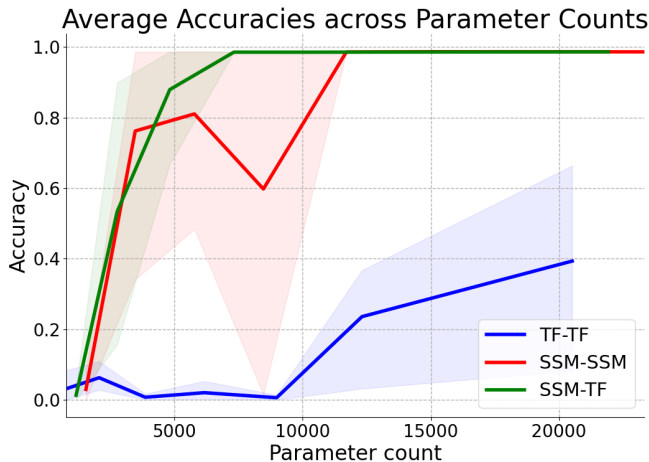}
    \caption{Results from training small models on Needle in a Haystack, across an increase in the hidden dimension of the models with no context windowing. The hybrid and SSM perform this task with fewer parameters than the Transformer, however we still see the hybrid with a slight improvement. This task was expected to be hard for the Transformer and not the SSM.}
    \label{fig:exp_needle}
\end{figure}

\noindent {\bf Needle in a Haystack.} \textit{Setup.} To complement MKAR, we trained the same models on needle-in-a-haystack (NH). 
\begin{definition}[NH]
    Let $\vec{\x}$ be a sequence of length $L$ sampled from a vocabulary $\cV \cup \{M\}$. Let the location of $M$ be denoted by $i^*$. Performing \textit{NH} is outputting $\vec{\x}_{i^*+1}$.
\end{definition}
This task is simple: copy the token(s) after a marker token when requested for at the end of the input sequence. This task is hard for Transformers due to windowing, while SSMs and hybrids should perform this task easily. In the framework of function-composition, $u$ is the empty map, and $F$ is the identity. As such, we do not have the hardness criterion for SSMs; we should therefore not immediately expect hybrids to perform differently than SSMs.

\textit{Results.} We show results in \Cref{fig:exp_needle}  for full-context attention. Even in the situation where the Transformer could possibly learn the task, small token dimensions result in learnability issues. 
SSMs also perform more inconsistently than hybrid models in small parameter regimes. The mechanism behind these separations is not directly characterized by function-composition and is left to future work.

For both of these tasks, we see the same property: on these synthetic tasks and at the scales tested, \textbf{hybrids outperform pure models, even when we use tasks that are outside of the function-composition framework.}

\subsection{Further Experiments}

In contrast to the much smaller expressivity experiments, our next set of experiments are designed to increase the scale of the models to somewhere nearer those of modern LLMs. These models will have around 100 million parameters each, closer to the scale of standard language models. Different properties of these models are tested since accuracy on many of the above tasks already approaches 1, and scaling up difficulty parameters, such as vocabulary size, which only increases the difficulty by a small amount, leads to models with similar behaviors to the expressivity experiments.

\noindent {\bf Associative Recall with Decoding.}
To analyze these different analyses, we use the more difficult task of associative recall with decoding. Empirically, this task proved far more challenging than selective copy, where 2-layer models with the same scale of parameters learned nothing. 

\begin{figure}[t]
    \centering
    \includegraphics[width=0.6\linewidth]{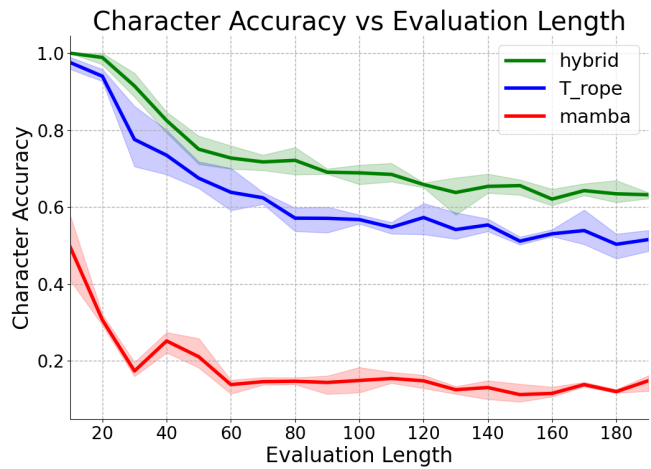}
    \caption{The distribution of accuracies across different input sequence lengths. Hybrid models with comparatively similar parameters as their attention/Transformer counterparts perform better at longer lengths consistently.}
    \label{fig:decode_recall_length_gen}
\end{figure}

\noindent {\bf Length Generalization.}  
\textit{Setup.} First, we investigate the length generalization of different models. Each model is trained on sequences of length 20 to 50, and tests on longer sequences as well. Comparing the hybrid model to the Transformer (T\_rope) and SSM (mamba), Figure \ref{fig:decode_recall_length_gen} shows how the hybrid models \textit{consistently} outperforms the pure models. 

\textit{Results.} As expected, the performance drops as sequences grow longer; however, hybrids lose performance at the slowest rate. This means that even though hybrids and Transformers behave within 2\% of each other for short sequences, this separation grows to be around 10\% for longer sequences.

\begin{table}[t]
    \centering
    \begin{tabular}{c|c c c}
    \toprule
    Train Proportion & SSM & TF & Hybrid \\
    \midrule
    0.05 & 0.24 & \textbf{0.47} & \textbf{0.47} \\
    0.1  & 0.34 & 0.40 & \textbf{0.47} \\
    0.3  & 0.17 & 0.64 & \textbf{0.74} \\
    0.5  & 0.46 & 0.63 & \textbf{0.77} \\
    0.8  & 0.67 & 0.63 & \textbf{0.83} \\
    0.9  & \textbf{0.86} & 0.61 & 0.80 \\
    \bottomrule
    \end{tabular}

    \caption{Results from training 12-layer models with different proportions of bits for Associative Recall with Decoding. Data are evaluation accuracies for evaluation bit proportions of 0.2. Each architecture tends to improve performance as the training bit proportion increases, with hybrids consistently out-performing the pure models.}
    \label{tab:robustness}
\end{table}

\noindent {\bf OOD Generalization.}
\textit{Setup.} We also tested these models as their sampling distributions are changed. Specifically, we tested Associative Recall with Decoding with differing proportions of bits between test and train time. This task allows us to test these behaviors on larger models, as the other two tasks saturate, only showing 100\% accuracy on all tests. Under these distributions, we can see which of the architectures learns representations that consistently perform well across different varying sampling distributions.

\textit{Results.} The results can be seen in \Cref{tab:robustness}. For almost all training distributions, the hybrid indeed performs the best on a 0.2 proportion test set. However, there are some other notable trends within these results beyond just the hybrid's performance. In particular, the different architectures show varying behavior across training distributions. SSMs tend to improve the most as more training bits are added, while Transformers improve the least. Hybrid models attain the best of both works, acting well with both a high frequency and a low frequency of bits. 

\section{Conclusion}

We studied when hybrid sequence models combining SSM and attention layers can simultaneously achieve strong expressivity and favorable memory scaling. We formalized function-composition tasks that require both (i) extracting a control variable from long context and (ii) performing content-addressable retrieval conditioned on that variable. Under natural conditions, we showed that pure SSMs and sliding-window Transformers each face fundamental memory limitations on this family. In contrast, we constructed small hybrids that provably solve selective copying and associative recall with decoding while using substantially smaller working memory than either pure counterpart. Experiments on learned models corroborated these separations, showing that hybrids can outperform larger pure baselines and generalize better to longer sequences and distribution shifts. Limitations include our focus on synthetic tasks and restricted Transformer attention mechanisms; extending the theory to broader attention patterns, external memory, identification of real function-composition datasets, and naturalistic long-context workloads is an important direction for future work.

\bibliography{reference}
\bibliographystyle{plainnat}

\newpage
\appendix

\section*{Supplementary Material}
We provide supplementary materials here. In \Cref{app relate work}, we provide related works. In \Cref{app notations}, we give a complete list of preliminaries and notations. In \Cref{app proof lb}, we present omitted proofs in \Cref{sec lb}, establishing hardness results for SSMs and transformers. In \Cref{app:constructions}, we provide omitted proofs in \Cref{sec merge}, providing concrete hybrid model constructions. In \Cref{app experiments}, we provide additional details to our empirical experiments.

\section{Related Works}\label{app relate work}
\noindent \textbf{State Space Models.}
State space models (SSMs) are a classical framework \cite{elman1990finding,hochreiter1997long} for modeling sequential data via the evolution of a latent state governed by a dynamical system. 
Recently, SSMs have attracted renewed interest in machine learning as efficient alternatives to attention-based architectures for long-sequence modeling, owing to their linear-time inference and favorable memory scaling.
In their canonical form, SSMs describe sequences through linear state transitions and observation maps, providing a principled mechanism for compressing historical information into a fixed-dimensional representation. 
This revival includes structured SSM architectures that carefully design or learn the state dynamics to capture long-range dependencies, such as HiPPO \cite{gu2020hippo} and S4\cite{gu2021efficiently}, as well as simplified variants like S4D\cite{gu2022parameterization}. More recently, selective and input-dependent SSMs—most notably Mamba \cite{gu2024mamba}, have demonstrated strong empirical performance at scale.

\noindent \textbf{Hybrid Architectures.}
Hybrid sequence models that combine state space/recurrent dynamics with attention mechanisms have gained increasing attention as a means to balance efficient long-range memory with expressive contextual interaction. In the Transformer era, models like Transformer-XL \cite{dai2019Transformer} explored recurrent memory within attention-based architectures, providing early empirical evidence for hybrid designs. With the modern revival of structured state space models, works such as \cite{fu2022hungry,arora2023zoology}, which combine SSM layers with a few attention layers, have shown that hybrids can match or exceed Transformer performance on language tasks while retaining linear-time context dynamics. These findings attracted recent interest in empirical studies of hybrid models on in-context learning \cite{park2024can}, language tasks \cite{lee2025understanding} and large-scale empirical validation \cite{lieber2024jamba,ren2024samba}. A consistent theme exists for each of these studies: empirically, hybrid models tend to perform better than pure models of similar sizes, especially on long-sequence tasks. However, many miss a rigorous understanding of what the structure of these tasks are which requires a mixture of these pure models.

\noindent \textbf{Expressive Power and Efficiency Tradeoffs.}
The expressive power of pure state space models and pure Transformers has been extensively studied through the lenses of computational and communication complexity \cite{merrill2023parallelism,peng2024limitations,merrill2024illusion,chen2024theoretical,yehudai2025depth}, providing purely theoretical characterizations of the classes of problems each architecture can solve. Complementary to this line of work, a growing body of research—motivated by the challenges of long-context reasoning \cite{waleffe2024empirical}—investigates expressivity and memory–computation efficiency using carefully designed synthetic tasks in controlled empirical settings. These tasks are designed to probe specific capabilities such as long-range copying, indexing, and associative retrieval, including repeat-copy tasks \cite{jelassi2024repeat} and associative recall benchmarks \cite{fu2022hungry,arora2023zoology}. For example, \cite{jelassi2024repeat} shows that state space models have difficulty copying long contexts, whereas a small Transformer can solve the same task under mild assumptions. More recently, 
\cite{zhan2025overcoming} introduces joint recall tasks that are provably hard for pure state space models in sub-quadratic time, yet become tractable when a state space model is augmented with a \emph{context-dependent} sparse attention layer. Despite these advances, the fundamental tradeoffs between expressive power and efficiency for hybrid architectures remain poorly understood.

\section{Complete Preliminaries and Notations}\label{app notations}

\begin{table}[t]
    \centering
    \begin{tabular}{|c|c|}
    \hline
        Symbol & Meaning \\
    \hline \hline
        $\phi, \psi, \Phi$ & Token and Positional Embeddings \\ 
        $u,v$ & Control parameters of the task \\
        $F$ & The target task \\
        $H$ & (Relative) Entropy \\
        $I$ & Mutual Information \\
        $\W_q,\W_k,\W_v,\W_o$ & Transformer parameters \\
        $\W_A,\W_B,\W_C,\Delta$ & SSM parameters \\
        $\x$ & The input sequence \\
        $\cV$ & Vocabulary space \\
        $\cN, \cM$ & Number/Vocabulary components of $\cV$\\
        $Y$ & Target space, typically $\cV^n$  \\
        $d$ & The token dimension \\
        $d_s$ & The state dimension \\
    \hline
    \end{tabular}
    \caption{Notation.}
    \label{tab:notation}
\end{table}

We consider sequence to sequence token prediction problem. Let $\cV$ be some vocabulary of tokens and $V = |\cV|$ and $\vec{\x}=(\x_i)_{i=1}^L$ be and input sequence. A language model $M$ is sequence to sequence map $M: \cV^L \to \cV^m$ of the form $M(\vec{\x}) = F_N\circ F_{N-1} \circ \dots \circ F_1 (\vec{\x})$, where each $F_i$ is a sequence to sequence map called layer.

We will consider several different layers in this paper.

\noindent \textbf{Transformer Layer.}
Consider an input $\x_1,\dots,\x_L$ such that $\x_i \in \R^d$. An attention head $\Attn$ is defined by matrices $\W_k,\W_q,\W_v\in \R^{d \times d}$ such that $\Attn(\vec{\x})_j = \sum_{i=1}^n \alpha_{ji} \W_v\x_i$, where
\begin{align*}
    \alpha_{ji} := \frac{\exp\left((\W_q \x_j) \cdot (\W_k \x_i)\right)}{\sum_{i=1}^n \exp\left((\W_q \x_j) \cdot (\W_k \x_i)\right)}.
\end{align*}
\begin{remark}
    We remark that for some practical applications, a bias term $B$ will also be added to the computation of attention.
\end{remark}

An attention layer $\AT$ is defined by $H$ attention heads $\Attn_1,\dots,\Attn_H$ and a projection matrix $\W_o \in \R^{d \times dH}$. Denote by $\mathbf{O}:= (\Attn_1(\vec{\x})^\top,\dots,\Attn_H(\vec{\x})^\top)^\top \in \R^{dH \times L}$ the concatenation of the outputs of the $H$ attention heads, so the attention layer $\AT(\vec{\x})$ outputs $\W_o \mathbf{O} \in \R^{d \times L}$.

A Transformer layer $\TF$ is defined by an attention layer $\AT$ and an MLP layer $\MLP$. In particular, an MLP layer is defined as $f(\x): \R^d \to \R^d$ as $f(\x) = \mathbf{U}_2 \sigma(\mathbf{U}_1 \x)$, where $\mathbf{U}_1,\mathbf{U}_2$ are matrices and $\sigma$ is an activation function applied coordinate-wise. Specifically, $\TF(\vec{\x}) = \MLP(\AT(\vec{\x}))$.

\noindent \textbf{State Space Model Layer.}
We use a similar definition of SSM layer as \cite{jelassi2024repeat}.
A state space $\mathcal{S}$ is some finite set. We denote by $\mathrm{mem}(\mathcal{S})$ the number of bits required to encode the states of $\mathcal{S}$, namely $\mathrm{mem}(\mathcal{S}) = \log(|\mathcal{S}|)$. A \textit{generalized state space layer} (GSSM) is a sequence model defined by an update rule $u: \mathcal{S} \times \cV \rightarrow \mathcal{S}$ and some output function $r: \mathcal{S} \rightarrow \cV$. Let $s_0 \in \mathcal{S}$ be some initial state. Given some sequence $\x_1, \ldots, \x_L$, the state of the model at iteration $i$ is denoted by $S_i(\x_1, \ldots, \x_i)$ and the output token is denoted by $R_i(\x_1, \ldots, \x_i)$. The state and output are defined recursively:

\begin{enumerate}
    \item $S_0(\emptyset) = s_0$,
    \item $S_i(\x_1, \ldots, \x_i) = u(S_{i-1}(\x_1, \ldots, \x_{i-1}), \x_i)$,
    \item $R_i(\x_1, \ldots, \x_i) = r(S_i(\x_1, \ldots, \x_i))$.
\end{enumerate}

\noindent \textbf{Mamba Layer.}
Our SSM layers in our constructions are defined as follows.
Let $\MB\^i$ be a Mamba layer. Let $d$ be the token embedding dimension size and $d_s$ be the state dimension size. 
Let 
\begin{align*}
    \W_A \in \R^{d_s \times d_s},
    \W_B \in \R^{d_s \times d},
    \W_C \in \R^{d \times d_s},
    \Delta(\x_t) \in \R
\end{align*}
be constants and a function that returns the shift in time $\Delta(\x_t)$ based on the current token. We update the hidden state matrix as follows:
\begin{align*}
    H_t &= (I - \Delta(\x_t)\W_A) H_{t-1}  + \Delta(\x_t) \W_B \x_t  \\
    y_t &= \W_C H_t
\end{align*}

This follows from the first-order approximation of the S6, where $\exp(-\Delta A) \approx I - \Delta A$. This is useful for construction purposes to avoid large constants $M \gg 1$ to push $\exp$ close to $0$. Equivalent constructions can be found by instead placing a large constant $M$ is specific locations with the original $\exp$.

We also omit the typical per-token dependencies that exist for $\W_B$ and $\W_C$ in standard Mamba, as constant functions were sufficient for our constructions.

In this work, we will consider two kinds of models used in practice, encoder-based models and decoder-based models. We next formally defined the two models as follows. 

\noindent \textbf{Encoder-Based Model.} An Encoder-Based Model $M$ can be thought of as a sequence-to-sequence map. That is to say, given a sequence of input tokens $\vec{x} = (\x_i)_{i=1}^L$, the model $M$ maps it to another sequence $\vec{y} = (y_i)_{i=1}^L$, each token $y_i$ corresponds to a token $\x_i$ in the sequence. Specifically, let $M = F_N \circ F_{N-1} \circ \cdots \circ F_1(\vec{\x})$ be a language model with $N$ layers. Each layer $F_t$ has an input $h^{(t-1)} \in \R^{d\times L}$ and an output $h^{(t)} \in \R^{d \times L}$, in particular $h\^0$ is the embedding of the input sequence $\vec{x}$. Furthermore, for every $t \in [N]$ and $i \in [L]$, $h\^t_i$ is a function of the whole vector $h^{(t-1)}$.

\noindent \textbf{Decoder-Based Model(Autoregressive Model).}
A Decoder-Based Model $M$ can be thought of as an autoregressive(generative) model. Roughly speaking, for each input sequence $\vec{\x}$, we consider recursively generating $\x_{L+i} = M(\x_1,\dots,\x_{L+i-1})$ and denote by $M(\vec{\x}) = \x_{L+1},\x_{L+2},\dots$ the final output of the model $M$. For a multi-layer model $M = F_N \circ F_{N-1} \circ \cdots \circ F_1(\vec{\x})$, the inference stage of $M$ contains two stages. In the first stage, 
each layer $F_t$ has an input $h^{(t-1)} \in \R^{d\times L}$ and an output $h^{(t)} \in \R^{d \times t}$, in particular $h^{(0)}$ is the embedding of the input sequence $\vec{\x}$. However, unless the encoder-based model, $h^{(t)}_i$ only depends on $h^{(t-1)}_1,\dots,h^{(t-1)}_i$. We will decode $h^{(N)}_L$ as $\x_{L+1}$ and enter the second stage, where we generate $\x_{L+2}$ by consuming $(\x_1,\dots,\x_{L+1})$.

\noindent \textbf{Memory Budget.} In this work, we will compare the behavior of different models according to their memory budget. In particular, we will consider two types of budgets: input-dependent memory and input-independent memory. By input-dependent memory, we mean the space needed for storing the input as well as the intermediate results. By input-independent memory, we mean the number of parameters of the model.

\noindent \textbf{Embeddings.}
A binary embedding for a finite space $\cal X$ will be a map $\psi'_{d'}:\mathcal X \rightarrow \{-1, 1\}^{d'}$ where $d' \geq \log_2 |\mathcal X| $ or larger.

\section{Omitted Proofs in \Cref{sec lb}}\label{app proof lb}

\subsection{Proof of \Cref{lm group ssm}}\label{app proof group ssm}

\begin{lemma}[Restatement of \Cref{lm group ssm}]
Consider a $k$-layer auto-regressive SSM $M$ defined as $\SSM_1 \to \SSM_2 \to \dots \to \SSM_k$, where for $j \in [k]$, $\SSM_j$ is an SSM layer. There is a model $\SSM'$ that only consists a single layer SSM, which behaves the same as $M$. In particular, denote the state space of $\SSM_j$ as $\cS_j$, $j \in [k]$, and denote by the state space of $\SSM'$ as $\cS'$, then $\card{\cS'} \le \prod_{j=1}^k \card{\cS_j}$.
\end{lemma}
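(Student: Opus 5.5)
The idea is to build $\SSM'$ as the product automaton of the $k$ layers. We then check by induction on the sequence position that it reproduces the output stream of the stack.

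Write $\SSM_j$ as the tuple $(\cS_j, u_j, r_j, s_0^{(j)})$. In the composed model, layer $j$ consumes the output stream of layer $j-1$, and layer $1$ consumes the input tokens. So the update rules take the form $u_j:\cS_j\times\cV\to\cS_j$ and the outputs take the form $r_j:\cS_j\to\cV$. This typing is exactly what makes the composition well defined.

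Define $\cS'=\cS_1\times\cdots\times\cS_k$ with initial state $s_0'=(s_0^{(1)},\dots,s_0^{(k)})$. The new update rule $u'((s^{(1)},\dots,s^{(k)}),\x)=(t^{(1)},\dots,t^{(k)})$ is computed sequentially:
\[
t^{(1)}=u_1(s^{(1)},\x), \qquad t^{(j)}=u_j\bigl(s^{(j)},\,r_{j-1}(t^{(j-1)})\bigr)\ \text{ for } j=2,\dots,k.
\]
The output is $r'(t^{(1)},\dots,t^{(k)})=r_k(t^{(k)})$. This is a legitimate GSSM: $u'$ is a deterministic function of the current state and the current token only. The reason is that each layer's $i$-th output depends only on its $i$-th state, which in turn depends only on its previous state and its $i$-th input.

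Correctness goes by induction on $i$. Let $s_i^{(j)}$ be the state of $\SSM_j$ in the stack after $i$ steps, and let $y_i^{(j)}=r_j(s_i^{(j)})$, with $y_i^{(0)}=\x_i$. The induction claim is $S'_i(\x_1,\dots,\x_i)=(s_i^{(1)},\dots,s_i^{(k)})$.

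The base case $i=0$ holds by the choice of $s_0'$. For the inductive step, an inner induction on $j$ shows $t^{(j)}=s_i^{(j)}$. This uses the fact that $s_i^{(j)}=u_j(s_{i-1}^{(j)},y_i^{(j-1)})$, and that $r_{j-1}(t^{(j-1)})=y_i^{(j-1)}$ by the inner hypothesis.

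It follows that $R'_i=r_k(s_i^{(k)})=y_i^{(k)}$ for every $i$. In particular the two models agree at position $L$, and they also agree on any autoregressively generated continuation, since the same argument applies token by token. Finally, $\card{\cS'}=\prod_j\card{\cS_j}$ by construction.

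The only delicate point is causality. The product construction works only because each layer's output at position $i$ depends on its inputs up to $i$. This holds for SSM layers by definition, and this is the step that would fail for attention layers. The rest is routine bookkeeping.
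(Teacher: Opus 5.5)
Your proposal is correct and uses the same product-automaton construction as the paper: the state space is $\cS_1\times\cdots\times\cS_k$, the compound state is the tuple of the stack's per-layer states, and $r'$ is the last layer's output map. Your version is more careful than the paper's on two points. You write $u'$ explicitly as the within-step cascade $t^{(j)}=u_j(s^{(j)},r_{j-1}(t^{(j-1)}))$, which depends only on the current compound state and the current token, and you check correctness by an explicit induction; the paper leaves $u'$ somewhat implicit.
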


\begin{proof}[Proof of \Cref{lm group ssm}]
    Let $s\^j_0$, $u\^j$, $r\^j$, $R\^j_i$, and $S\^j_i$ be the SSM parameters that define $\SSM_j$. The output of this $k$-layer model is computed as follows:
    \begin{align*}
        S\^j_0(\emptyset) &= s\^j_0 \quad \text{ for } j \in [k] \\
        S\^1_i(\x_1,\dots,\x_i) &= u\^1(S\^1_{i-1}(\x_1,\dots,\x_{i-1}), \x_1) \\
        S\^j_i(R\^{j-1}_1,\dots,R\^{j-1}_i) &= u\^j(S\^j_{i-1}(R\^{j-1}_1,\dots,R\^{j-1}_{i-1}), R\^{j-1}_i) \quad \text{ for } 2 \le j \le k \\
        R\^j_i(R\^{j-1}_1,\dots,R\^{j-1}_i) &= r\^j(S\^j_i(R\^{j-1}_1,\dots,R\^{j-1}_i))
    \end{align*}
    The output of each layer is fed into the next layer to update its state.
    We define $\cS'$ to be $\{(s_1,\dots,s_k) \mid s_i \in\cS^i, i \in [k] \}$, so $\card{\cS'} = \prod_{i=1}^k \card{\cS^i}$. Let $\SSM'$ have state
    \[S'_i(\x_1,\dots,\x_i) = \lp(S\^1_i(\x_1,\dots,\x_i),\ S\^2_i(R\^1_1,\dots,R\^1_i),\ \dots\ ,S\^j_i(R\^{j-1}_1,\dots,R\^{j-1}_i)\rp).\]
    We remark that the definition above is well defined, since for every $j \in [k]$, $(R\^j_1,\dots,R^j_i)$ only depends on $(\x_1,\dots,\x_i)$, thus is available at time $i$. The output function $r'$ is defined to be the output of the final layer of the multilayer SSM $(S'_i)_k$,
    \[r'(S'_i(\x_1, \dots, \x_i)) = R\^k_i((S'_i)_k)\]
    All other parameters are simply their application across the different components of the compound state.
    \begin{align*}
        S'_0(\emptyset) &= (s\^1_0, \dots, s\^j_0) \\
        u'(S'_{i-1}(\x_1, \dots, \x_{i-1}), \x_i) &= (S\^1_i, \dots, S\^j_i)
    \end{align*}
    This construction computes the same result as the original multilayer model with the same size of state.
\end{proof}

\subsection{Proof of \Cref{th SSM lb general}} \label{app proof SSM lb general}

\begin{theorem}[Restatement of \Cref{th SSM lb general}]
   Let $F$ be a function defined \Cref{def function composition} that satisfies \Cref{asp ssm lb}. There is a distribution $D$ over the input $(u,v)$ such that any model $M$ that is a composition of $k$ state space layers $\SSM_i$, with state space $\mathcal{S}_i, i \in [k]$ that can compute $F$ with probability $1/2$ must satisfy $\sum_{i=1}^k \log(\card{\mathcal{S}_i}) \ge \Omega(m\log(\card{\cV})-q\log(\card{\mathcal{Y}}))$.
\end{theorem}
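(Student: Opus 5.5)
We first apply \Cref{lm group ssm} to collapse the $k$ layers into a single SSM with state space $\cS'$ satisfying $\log\card{\cS'} \le \sum_{i=1}^k \log\card{\cS_i}$. It therefore suffices to lower bound $\log\card{\cS'}$ for a single-layer model. We take the input to be presented as the block $u$ followed by the block $v$; this is the reading implicit in ``a distribution over $(u,v)$'', and concrete tasks are later simulated by it. The hard distribution $D$ draws $u$ uniformly from $\cV^m$ and, independently, $v$ uniformly from the set $Q=\{v^{(1)},\dots,v^{(q)}\}$ of \Cref{asp ssm lb}. By Yao's min-max principle we may assume the model is deterministic and succeeds with probability at least $1/2$ over $D$.

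The key structural fact is a bottleneck. After reading $u$, the model is in a state $s=S(u)\in\cS'$. Its output on $(u,v)$ is then a deterministic function $g(s,v)$, since the remaining computation depends only on $s$ and $v$. Define the \emph{answer profile} $A(u)=(g(S(u),v^{(1)}),\dots,g(S(u),v^{(q)}))\in\mathcal{Y}^q$. Then $A(u)$ is a function of $S(u)$ alone, so $A$ takes at most $\card{\cS'}$ distinct values. Call $u$ \emph{good} if the model is correct on at least half of the $q$ controls. By Markov's inequality applied to the error fraction, at least a constant fraction of $u$ (a $\Omega(1)$ fraction, with the threshold tuned to the $1/2$ success rate, e.g.\ correct on a $1/4$ fraction for a $1/3$ fraction of $u$) are good.

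Next we count, using injectivity of $G(u)=(F(u,v^{(i)}))_i$. For a good $u$, $A(u)$ agrees with $G(u)$ in a constant fraction $cq$ of coordinates. Fix a profile $a\in\mathcal{Y}^q$. Every good $u$ with $A(u)=a$ has $G(u)$ within Hamming distance $(1-c)q$ of $a$. Since $G$ is injective, the number of such $u$ is at most the number of strings in $\mathcal{Y}^q$ in that Hamming ball, which is at most $2^q\card{\mathcal{Y}}^{(1-c)q}\le \card{\mathcal{Y}}^{O(q)}$. Hence
\[
\Omega(\card{\cV}^m) \le \#\{\text{good } u\} \le \card{\cS'}\cdot \card{\mathcal{Y}}^{O(q)},
\]
and taking logarithms gives $\log\card{\cS'} \ge m\log\card{\cV} - O(q\log\card{\mathcal{Y}}) - O(1)$. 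This is the claimed bound with the constants absorbed into $\Omega(\cdot)$. Equivalently, one can phrase the count information-theoretically: $H(u)=m\log\card{\cV}$, and $H(u\mid S(u)) \le H(u\mid A(u))$, which is bounded by $\log$ of the ball size plus Fano-type slack.

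The main obstacle is the robustness step, namely turning ``correct with probability $1/2$ over random $(u,v)$'' into a per-$u$ statement that still pins $u$ down from $A(u)$ up to a small list. Plain injectivity only helps when all $q$ answers are correct. Paying $q\log\card{\mathcal{Y}}$ for arbitrary corrections to the profile is exactly what the subtracted term in the theorem permits, so the crude Hamming-ball bound above should suffice. I would check that the constants work out, and if the $1/2$ threshold is too tight, I would use the averaging bound in the form ``a $\ge 1/4$ fraction of $u$ are correct on $\ge 1/4$ of $Q$.'' A secondary point to verify is that the ordering of $u$ before $v$ is legitimate, which holds because $D$ is over $(u,v)$ as the remark notes.
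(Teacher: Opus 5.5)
Your proof is correct and uses the paper's scaffolding: collapse the layers with \Cref{lm group ssm}, Yao, $u$ uniform on $\cV^m$ followed by $v$ uniform on $Q$, the state $S(u)$ as the bottleneck, and injectivity of $G$. The core estimate, however, is different. The paper argues entropically. Injectivity gives $H(u\mid s)=H(Y\mid s)\le\sum_i H(Y_i\mid s)$. Fano applied to each coordinate (the guess $g(s,v^{(i)})$ is a function of $s$), plus concavity, then yields $\log\card{\cS}\ge m\log\card{\cV}-q\bigl(H_2(\bar\epsilon)+\bar\epsilon\log\card{\mathcal{Y}}\bigr)$, where $\bar\epsilon$ is the average error. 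Your closing information-theoretic remark is essentially this argument. Your main route instead thresholds with reverse Markov and counts in Hamming balls, a list-decoding argument.

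Fano's advantage is that it averages over the controls without thresholding, so the coefficient of $q\log\card{\mathcal{Y}}$ equals the error rate. That gives $1/2$ at accuracy $1/2$, versus your $3/4$. It gives $1/8$ under the error $<1/8$ that the paper's proof actually assumes, despite the stated threshold $1/2$. If you re-tune your count for high accuracy (call $u$ good when its error fraction is at most $2\bar\epsilon$), you get the same shape with a factor-$2$ loss. Your advantage is that the argument is elementary and works directly at the stated accuracy $1/2$.

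One thing to fix in the write-up: do not absorb the constants into $\Omega(\cdot)$. Injectivity of $G:\cV^m\to\mathcal{Y}^q$ forces $q\log\card{\mathcal{Y}}\ge m\log\card{\cV}$. So the literal right-hand side of the statement is nonpositive and the theorem as written holds trivially. All the content lies in the constant $C$ of a bound $m\log\card{\cV}-Cq\log\card{\mathcal{Y}}$, which is informative only when $C<1$.

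You should therefore state your bound explicitly as $\log\card{\cS'}\ge m\log\card{\cV}-\tfrac34 q\log\card{\mathcal{Y}}-q-\log 3$. This requires the $1/4$-threshold notion of good $u$; the ``correct on at least half'' version does not give a constant fraction of good $u$ at accuracy exactly $1/2$. For selective copying ($m=q=N$, $\mathcal{Y}=\cV$) the bound reads $N\bigl(\tfrac14\log\card{\cV}-1\bigr)-\log 3$. This is a genuine $\Omega(N\log\card{\cV})$ bound once $\card{\cV}$ exceeds a constant.
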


\begin{proof}[Proof of \Cref{th SSM lb general}]
    By \Cref{lm group ssm}, we only need to show the hardness against a single layer of state space model. By Yao's min-max principle, it is sufficient to show that we cannot construct any deterministic state-space model that can solve the problem with a good probability when the input is drawn from some distribution $D$.
    We consider the following distribution $D$ over a sequence of length $m+n$ tokens. For the first $m$ tokens, we sample each token uniformly from $\cV$, representing $u$ is drawn uniformly at random. For the last $n$ tokens, we will draw $v \sim Q$ independent on $u$. We next lower bound the size of $\card{\mathcal{S}}$ of any SSM that can give the correct output when the input $(u,v)$ is drawn from $D$. Denote by $Y_i = F(u,v^{(i)})$ for $i \in [q]$ and $Y=(Y_1,\dots,Y_q)$. Furthermore, let $s = S_m(u)$ the random state of the model after reading $u$. By \Cref{asp ssm lb}, we know that $H(s\mid u) = 0$.
    Since 
    \begin{align*}
        I(u;s) = H(u) - H(u \mid s) = H(s) - H(s \mid u),
    \end{align*}
    we have
    \begin{align*}
        H(u) - H(u \mid s) = H(s) \le \log(\card{\mathcal{S}}),
    \end{align*}
    because the support of $s$ has size at most $\card{\mathcal{S}}$. We next upper bound $H(u \mid s)$. By the symmetry of mutual information, we have
    \begin{align*}
        H(u \mid s) & = H(Y \mid s) + H(u \mid Y, s) - H(Y \mid u, s) \\
        & = H(Y \mid s) \le \sum_{i=1}^q H(Y_i \mid s),
    \end{align*}
    where the second equation holds by $H(u \mid Y, s) - H(Y \mid u, s) = 0$. It remains to upper bound $H(Y_i \mid s)$. Let $\err_i:=\Pr_{u\sim \cV^m}(R(S_{m+n}(u,v^{(i)}))\neq Y_i)$. By Fano's inequality, we have 
    \begin{align*}
        H(Y_i \mid s)\le H_2(\err_i) + \err_i \log(\card{\mathcal{Y}}). 
    \end{align*}
    Thus,
    \begin{align*}
        H(u\mid s) &\le q \frac{1}{q}\sum_{i=1}^qH(Y_i \mid s)\\
        &\le q \frac{1}{q}\sum_{i=1}^q(H_2(\err_i) + \err_i \log(\card{\mathcal{Y}}))\\
        &\le q (H_2(\err)+\err \log(\card{\mathcal{Y}}))
    \end{align*}
    This implies, if we set up $\err<1/8$, then we have 
    \begin{align*}
        \log(\card{\mathcal{S}}) \ge m\log(\card{\cV}) - q(H_2(1/8)+\log(\card{\mathcal{Y}})/8).
    \end{align*}

To conclude the proof of \Cref{th SSM lb general}, we discuss the possible choice of $q$ and $\card{\mathcal{Y}}$ that satisfy \Cref{asp ssm lb}. Since $G(u):=(F(u,v^{(1)}),\dots,F(u,v^{(q)}))$ is an injection, we know that the smallest possible choice of $q, \card{\mathcal{Y}}$ satisfies $q\log(\card{\mathcal{V}}) = O(m\log(\card{\cV}))$. This implies $\log(\card{\mathcal{S}}) \ge \Omega(m\log(\card{\cV}))$

\end{proof}

\subsection{Proof of \Cref{th Transformer lb}}\label{app proof transform lb}

\begin{theorem}[Restatement of \Cref{th Transformer lb}]
    Let $F$ be a function that satisfies \Cref{ass lb Transformer}. There is a distribution $D$ over the input such that any model $M$ that is a composition of $k$ Transformer blocks $\TF_1,\dots,\TF_k$ that can compute $F$ with probability $2/3$ must satisfy $\sum_{i=1}^k W_i \ge R$.
\end{theorem}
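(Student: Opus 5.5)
The plan is a receptive-field argument for stacked sliding-window attention, turned into a lower bound by a two-point distribution. Here $W_i$ denotes the (causal) window size of the attention in $\TF_i$. The output position $j$ of layer $i$ depends only on layer-$(i-1)$ positions in $\{j-W_i+1,\dots,j\}$. MLPs act token-wise, so they do not enlarge this dependence set.

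\emph{Step 1: receptive field.} We prove by induction on $i$ that $h^{(i)}_L$ is a deterministic function of the embedded inputs at positions $L-\sum_{t\le i}W_t+1,\dots,L$. The base case is the embedding layer, where $h^{(0)}_j$ depends only on $\x_j$ and the position $j$. For the inductive step, $h^{(i)}_L=\MLP(\AT(h^{(i-1)}))_L$ is a function of $h^{(i-1)}_{L-W_i+1},\dots,h^{(i-1)}_L$. Each $h^{(i-1)}_{j}$ depends on positions in $[j-\sum_{t<i}W_t+1,\,j]$, and taking the union over $j$ gives the claim. We should account for the off-by-one from each window including the current token. The clean bound is that the receptive field has size at most $\sum_i W_i - (k-1)\le \sum_i W_i$, which is enough. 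Positional encodings are functions of the index only, so they are the same for both inputs.

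\emph{Step 2: hard distribution.} By \Cref{ass lb Transformer} there are $\vec{\x},\vec{\x'}$ agreeing on positions $L-R+1,\dots,L$ with $F(\vec{\x})\neq F(\vec{\x'})$. Let $D$ be uniform on $\{\vec{\x},\vec{\x'}\}$. By Yao's principle, as in the SSM case, it suffices to consider deterministic models (or we can argue directly on a fixed model). Suppose $\sum_i W_i< R$. Then the receptive field of the final output is contained in $\{L-R+1,\dots,L\}$, where the two inputs agree. So $M(\vec{\x})=M(\vec{\x'})$, and at least one of them is wrong. The success probability is then at most $1/2<2/3$, a contradiction. For a randomized model, the output distributions on the two inputs coincide, so the sum of the success probabilities is at most $1$, and again one of them is at most $1/2$.

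\emph{Main obstacle.} The main difficulty is making Step 1 rigorous in the model's formalism. The paper's attention sums over all positions, so we must state that $\TF_i$ uses a sliding window of size $W_i$ and check that softmax normalization is also restricted to the window. We also need positional embeddings to be index-based, not content-based, so that identical suffixes give identical representations at the last positions. If the paper's windows are not causal, we would instead restrict to the causal/decoder convention stated in \Cref{app notations}, where $h^{(t)}_i$ depends only on earlier positions. The remaining steps are straightforward.
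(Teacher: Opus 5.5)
Your proposal is correct and follows essentially the same route as the paper. The paper likewise takes $D$ uniform on the two $R$-locally-sensitive inputs $\vec{\x},\vec{\x'}$, notes that the final output of a stacked sliding-window Transformer depends only on the last $\sum_i W_i$ tokens, and concludes that $M$ errs with probability $1/2$; your receptive-field induction (tokenwise MLPs, index-only positional encodings, window-restricted softmax, randomized models) simply makes rigorous the step the paper asserts with ``Notice that''.
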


\begin{proof}[Proof of \Cref{th Transformer lb}]
Consider any sliding window Transformer $M$ with window size $W_i$ for the $i$-th layer. Denote by $W = \sum_{i=1}^k W_i$ the effective window size of a Transformer.
Notice that the output of $M$ is a deterministic function of $(\x_{L-W+1},\dots,\x_L)$. Thus, we construct a distribution $D$ that draws $\vec{\x}$ and $\vec{\x'}$ uniformly. For any input drawn from $D$, $M$ has the same output. However, by definition, $F(\vec{\x}) \neq F(\vec{\x'})$, which implies that $M$ fails to output correctly with probability $1/2$.


\end{proof}

\section{Omitted Proofs in \Cref{sec merge}}
\label{app:constructions}

\subsection{Proof of \Cref{th lb selective copying}}\label{app proof lb copy}

\begin{theorem}[Restatement of \Cref{th lb selective copying}]
Consider the task of selective copying. There is a distribution $D$ over $\cV^L$ such that any pure state space model that can solve the task for some $\vec{\x}$ drawn from $D$ with probability $90\%$ must have $\sum_{i=1}^k\log(\card{\mathcal{S}_i}) \ge N \log M$, where $\mathcal{S}_i$ is the state space of the $i$th SSM layer, furthermore, any pure Transformer model that can solve the task with probability $90\%$ must have $\sum_{i=1}^kW_i \ge \Omega(L)$.
\end{theorem}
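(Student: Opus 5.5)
We derive both bounds by casting selective copying as a function composition (\Cref{def function composition}) and then checking \Cref{asp ssm lb} and \Cref{ass lb Transformer}. This lets us invoke \Cref{th SSM lb general} and \Cref{th Transformer lb} almost directly, provided we write down explicit input distributions over $\cV^L$ that simulate the abstract distributions used in those proofs.

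\emph{SSM part.} Take $m=N$ and let $u(\vec{\x})=\vec{\x}_{L-N+1:L}$, the last $N$ tokens. Let $v(\vec{\x})$ be the most recent number token, and set $F(u,v)=u_{N+1-v}$. We choose $D$ to generate sequences of the form $(\#j,\,u_1,\dots,u_N)$, padded with a fixed prefix if $L$ exceeds $N+1$. The block $u\in\cM^N$ is drawn uniformly, and $j$ is drawn uniformly from $[N]$. We take $Q=\{\#1,\dots,\#N\}$ with $q=N$, so that $G(u)=(F(u,\#1),\dots,F(u,\#N))$ simply lists all of $u$ and is therefore injective.

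One wrinkle is that the control token precedes $u$ in this layout, whereas the proof of \Cref{th SSM lb general} reads $u$ first and $v$ afterwards. There are two ways to handle this.
\begin{enumerate}
\item Place $u$ first and append $\#j$ at the very end. The most recent number token is then $\#j$ itself, and we redefine the copy offset relative to the position of $\#j$.
\item Rerun the information-theoretic argument directly. Because $u$ is uniform and independent of $j$, the state obtained after reading $u$ must determine, via Fano's inequality, almost all of $u$ across the $N$ choices of $j$.
\end{enumerate}
The second option is cleaner, and we follow it. Conditioning on $s=S(u)$, we obtain $H(u\mid s)\le N\,(H_2(\err)+\err\log M)$. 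With $\err\le 0.1$, this gives $\log|\mathcal{S}|\ge N\log M\,(1-0.1)-N H_2(0.1)$, which is $\Omega(N\log M)$. By \Cref{lm group ssm}, $\sum_i\log|\mathcal{S}_i|\ge\log|\mathcal{S}'|$, so the same bound applies to multi-layer SSMs.

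\emph{Transformer part.} Here we show that $F$ is $R$-local sensitive with $R=\Omega(L)$. Consider two sequences that contain a single number token, placed at position $L/2$: one contains $\#1$ and the other $\#2$. They agree on everything after position $L/2$, and the two tokens being copied, $\x_L$ and $\x_{L-1}$, are set to different values. The sequences then coincide on their last $L/2-1$ positions but have different answers. \Cref{th Transformer lb} therefore gives $\sum_i W_i\ge L/2$.

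To reach the $90\%$ success threshold rather than $2/3$, we randomize. The number token is placed uniformly at random in the first half of the sequence, and all remaining tokens are uniform in $\cM$. A model whose total window satisfies $\sum_i W_i<L/2$ never sees the number token, so its output is independent of which $\#j$ occurred. With $j$ uniform in $\{1,2\}$ and $\x_L\neq\x_{L-1}$ holding with probability $1-1/M$, the model errs with probability close to $1/2$, which contradicts the $90\%$ guarantee.

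The main obstacle is the SSM reduction. The difficulty is matching the task's ``most recent number token'' semantics with the $u$-then-$v$ reading order while keeping $u$ uniform and independent of $v$, so that Fano's inequality yields the stated bound cleanly. The precise constant in $N\log M$ will come out as $\Omega(\cdot)$ unless the error terms are absorbed more carefully.
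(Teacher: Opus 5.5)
\textbf{The gap is in the SSM half.} You flagged the right obstacle but resolved it the wrong way. In your layout $(\#j,u_1,\dots,u_N)$ the control token is read \emph{before} $u$. So the state after reading $u$ is $S(\#j,u)$, which is a different state for each $j$.

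Your Fano step ``conditioning on $s=S(u)$'' therefore has nothing to condition on. No single state is forced to serve all $N$ control values. The fact that $u$ and $j$ are independent random variables does not supply one.

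The claimed bound is actually false for your $D$. Consider an SSM that does the following:
\begin{enumerate}
\item on reading $\#j$, it sets a countdown to $N+1-j$;
\item it decrements the countdown on each later token;
\item it stores the token read when the countdown reaches $0$, and outputs that stored token.
\end{enumerate}
This SSM is correct with probability $1$. It uses $(N+1)(M+1)$ states, i.e.\ $O(\log N+\log M)$ bits, far below $N\log M$.

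What the argument of \Cref{th SSM lb general} really needs is this: the model must commit to a state that depends on $u$ alone before $v$ arrives. Then the same $s$ determines every $F(u,v^{(i)})$, and hence all of $u$.

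\textbf{The fix is your option 1, which is what the paper does.} Draw the first $L-1$ tokens uniformly and let the last token be $\#j$, with $j$ uniform in $\{2,\dots,N\}$. You do not need to redefine the offset. The task already copies $\x_{L+1-j}$, which for $j\ge 2$ is one of the $N-1$ tokens preceding $\#j$. You only drop $j=1$, which would copy $\#j$ itself. The state after position $L-1$ is then a function of $u$ alone. Your Fano computation with $m=q=N-1$ then gives $\Omega(N\log M)$ as you intended.

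\textbf{Your Transformer half is correct.} Your randomized version handles the $90\%$ threshold more carefully than the paper does: the number token is hidden in the first half, $j$ is uniform in $\{1,2\}$, and $\x_L\neq\x_{L-1}$ with probability $1-1/M$.

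To get the single distribution $D$ the statement asks for, take the mixture $D_S/2+D_T/2$ of your two distributions, as the paper does. $90\%$ success on the mixture forces at least $80\%$ success on each component, and both of your arguments tolerate that.
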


\begin{proof}[Proof of \Cref{th lb selective copying}]
We write down the task in the form of a function composition $F(u(\vec{\x}),v(\vec{\x}))$. Let $u(\vec{\x}) = \vec{\x}_{L-N+1: L}$ and $v(\vec{\x}):=\argmax_{1 \le i \le L} x_i \in \cN$ and $F(u,v)$ be $u_{L+1-v}$. 

We first show that $F(u,v)$ satisfies \Cref{asp ssm lb}. We take $v^{(i)} = i, i \in [N]$, which implies $F(u,v^{(i)}) = u_i, i \in [N]$. Thus, $G(u)=(F(u,v^{(1)}),\dots,F(u,v^{(N)})) = u$ is an injection. Taking $m=q=N$, by \Cref{th SSM lb general}, 
we know that when a pure SSM gives a sequence of input of the form $(u,v)$, where $u$ is drawn uniformly from $\cV$ and $v$ is drawn uniformly from $\cN$, then if the SSM wants to solve the task with probability at least $7/8$, it needs 
\begin{align*}
\sum_{i=1}^k \log(\card{\mathcal{S}_i)} \ge \Omega(N \log(\card{M})),   
\end{align*}
when a pure SSM is given $(u,v)$ such that $u$ is uniformly drawn from $\cV^m$ and $v$ is drawn uniformly from $\cN$. To simulate such a distribution over $(u,v)$ using a long context $\vec{\x}$, we select the first $L-1$ tokens from $\cV$ uniformly at random, while selecting the last token as a random token from $\{2,\dots,N\}$. Notice that such a distribution $D_S$ is sufficient to simulate the required distribution of $(u,v)$, since the effective part of $u$ is $u_{1:N-1}$ and $u$ is $u_{1:N-1}$ is independent on $v$.

We next show that $F(u(x),v(x))$ is $L/2$ sensitive. Let $\vec{\x},\vec{x'} \in \cV^L$ be any input context such that $\vec{\x}_{L/2:L} = \vec{x'}_{L/2:L}$ and $\vec{\x}_i \not\in \cN$, for every $i=L/2,\dots,L$. This implies $v(\vec{\x})$ and $v(\vec{x'})$ only depends on the first $L/2$ coordinates. Thus, $F(u(\vec{\x}),v(\vec{\x}))$ is $L/2$ sensitive. By \Cref{th Transformer lb}, we know that any pure Transformer model that can compute $F(u(\vec{\x}),v(\vec{\x}))$ with a constant probability must have $\sum_{i=1}^k W_i \ge \Omega(L).$ In particular, to construct a distribution $D_T$ over $\vec{\x}$ that makes a pure Transformer fail, we draw $\vec{\x}$ such that $\vec{\x}_{L/2:L}$ is chosen uniformly from $\cM$ and $\vec{\x}_{L/2:L}$ is chosen uniformly from $\cV$.

To conclude the proof of \Cref{th lb selective copying}, we will choose a distribution $D:= D_S/2 + D_T/2$. Under this distribution, a pure SSM with $\sum_{i=1}^k \log(\card{\mathcal{S}_i)} < \Omega(N \log(\card{M}))$ has a constant failure probability when $\vec{\x}$ is drawn from $D_S$ and a Transformer with working memory less than $o(L)$ has a constant failure probability when $\vec{\x}$ drawn from $D_T$.
 \end{proof}

\subsection{Proof of \Cref{th hybrid copy}}\label{app proof hybrid copy}

\begin{theorem}[Restatement of \Cref{th hybrid copy}]
Consider the task of selective copying. There is a two-layer hybrid model that is a combination of a mamba layer and an attention layer that can solve the selective copying task for \emph{every} input sequence $\vec{\x} \in \cV^L$. Furthermore, the hybrid model has an embedding dimension $d = O( \max(\log \card{\cV},\log L))$ such that the Mamba layer has $O(\card{\cV})$ state spaces, while the attention layer has dimension $d$ and a sliding window size of $O(N)$.
\end{theorem}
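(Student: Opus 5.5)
We follow the merging template of \Cref{sec merge}: the Mamba layer computes the control variable $v(\vec{\x})$, i.e.\ the value of the most recent number token, and writes it next to each position. The attention layer then uses relative positions to retrieve $\x_{L+1-v}$ inside a window of size $O(N)$. The last valid position read is $L+1-N$, so a window of $N$ suffices. We fix a binary embedding $\psi'_{d'}$ of $\cV$ and of positions $[L]$ with $d' = O(\max(\log\card{\cV},\log L))$. The token embedding $\x_i$ contains three blocks: the token code, the position code $\psi'(i)$ (or a sinusoidal/RoPE-style code allowing relative shifts), and an indicator coordinate $\Ind[\x_i\in\cN]$ together with a coordinate holding the numeric value $\x_i$ when $\x_i\in\cN$.

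\textbf{Mamba layer.} We choose a state of dimension $d_s=O(d)$ that stores the code of the last number seen. Set $\Delta(\x_t)=1$ if $\x_t\in\cN$ and $\Delta(\x_t)=0$ otherwise, and take $\W_A=I$ on the stored block. Then $H_t=(I-\Delta\W_A)H_{t-1}+\Delta\W_B\x_t$ overwrites the state with $\W_B\x_t$ (the number's code) exactly when a number token arrives, and leaves it unchanged otherwise. This is just the first-order S6 update given in \Cref{app notations}. The readout $\W_C$ copies this block into fresh coordinates of the residual stream. The output at position $j$ therefore carries the code of $v_j$, the most recent number up to $j$, alongside $\x_j$ and $\psi'(j)$. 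Only $O(\card{\cV})$ distinct states occur (the last number, or none), matching the claimed count.

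\textbf{Attention layer.} At the final position $L$, the query must select the key at position $L+1-v_L$. We form the query from the pair (position code of $L$, number code $v_L$) and the key from the position code of $i$. Attention scores should be maximized exactly when $i = L+1-v_L$. Two implementations are possible. With binary codes, first have the MLP (or a bilinear map, since $v$ ranges over only $N$ values) compute the target position code $\psi'(j+1-v_j)$ into the residual. The query is then $c\cdot\psi'(j+1-v_j)$ and the key is $\psi'(i)$; the inner product of $\pm1$ codes equals $d'$ iff the codes match and is at most $d'-2$ otherwise, so scaling by a large $c$ makes the softmax an arbitrarily sharp hard selection. The value matrix $\W_v$ projects out the token code, and rounding or the output MLP decodes the token exactly. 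The sliding window of size $N$ ending at $L$ contains position $L+1-v$ because $v\le N$.

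The main obstacle is making the query compute the target position $j+1-v_j$ inside a single attention layer of dimension $d$ without a large lookup. I would first try rotary/sinusoidal position codes: for angles $\theta_k$, rotating the query by $-(1-v_j)\theta_k$ is a linear map applied to $(\cos,\sin)$ of $v_j\theta_k$. That quantity can itself be produced by storing $(\cos v\theta_k,\sin v\theta_k)$ as the number's embedding in the Mamba state, so the shift becomes bilinear and fits the query/key product. Since $O(\log L)$ frequencies suffice to separate positions with a constant score gap, exactness for \emph{every} input follows by taking the scale large and rounding the output. One remaining edge case is an input with no number token; it can be handled by a default state.
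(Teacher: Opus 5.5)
Your proposal is correct, and your Mamba layer coincides with the paper's: the state is overwritten on number tokens via $\Delta(\x)=\Ind_{\{\x\in\cN\}}$ and $\W_A=I$, then read out into fresh coordinates. Your attention step, however, takes a different and heavier route than the paper's.

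\textbf{How the paper avoids your obstacle.} The obstacle you single out, producing the target position $j+1-v_j$ in the query, never arises in the paper. Only the output at position $L$ matters, so $j=L$ is a constant that can be folded into the positional encoding. The paper encodes position $i$ by $\phi(i)=\phi'(L+1-i)$ and gives the number token $\#n$ the same binary code $\phi'(n)$. The target position $L+1-n_L$ therefore carries exactly the code of the stored number. The query is $M$ times that stored code and the key is $\phi(i)$; both are linear maps in plain non-rotary attention, and no arithmetic is needed anywhere.

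\textbf{Your two routes.} Both compute the shift at every position $j$ instead.
\begin{enumerate}
\item The binary route needs an MLP between the Mamba and attention layers that performs subtraction. This is true but deserves a line of justification. Bit $k$ of $j+1-v$ is the indicator that the integer $T_k=1+\sum_{i\le k}2^i(j_i-v_i)$ lies in $[2^k,2^{k+1}-1]\cup[-2^k,-1]$, so $O(1)$ ReLUs per bit suffice.
\item The rotary route checks out. For example, with $\theta_k=\pi/2^k$ for $k\le\lceil\log_2 N\rceil$, the correct key's score exceeds every other score in the window by at least $2$. But RoPE lies outside the paper's formal attention definition.
\end{enumerate}

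\textbf{Two asides to drop or fix.} The bilinear-map alternative does not work: for fixed $v$, the map $\psi'(j)\mapsto\psi'(j+1-v)$ is not linear on $\pm1$ codes. Likewise, with merely additive sinusoidal codes the query would have to be a product of position and number features. The exception is $j=L$ fixed, where the rotation by $(L+1)\theta_k$ is a constant absorbed into $\W_q$; that is the paper's trick in another form.

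\textbf{Trade-off.} Your approach gives a mechanism that is correct at every position and does not bake $L$ into the embedding. This is closer to what the paper's appendix says learned models do, whereas the paper's own construction is correct only at the last position. The paper's approach gives a purely linear construction with nothing between the two layers beyond the residual combination.
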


\begin{proof}[Proof of \Cref{th hybrid copy}]
We first construct the two-layer hybrid model and show the correctness of the construction.

\noindent \textbf{Token Embeddings.}
We begin by constructing an embedding function that maps each $\x \in \cV$ to a vector in $\R^{d'}$ for some $d>0$.
Let $d' = \log \card{\cV}$. We define $\psi': \cV \to \{\pm1\}^{d'}$ to be the binary encoding of the vocabulary $\cV$. For each $\x \in \cV$, we embed the token $\x$ as
\[\psi(\x) = \begin{pmatrix}
    \psi'(\x) & \One_{\{\x \in \calN\}}\psi'(\x) & \mathbf{0}
\end{pmatrix}^\top.\]
Here $\mathbf{0} \in \R^{\ell}$ for some $\ell \le O(\log(\card{\cV})+\log L)$ is a zero vector. For a given input context $\vec{\x}$, the embedded context has the form of 
\begin{equation*}
    \begin{pmatrix}
        \psi'(\x_1) & \psi'(\x_2) & \dots & \psi'(\x_{L-1}) & \psi'(\x_L) \\
        \One_{\{\x_1 \in \calN\}}\psi'(\x_1) & \One_{\{\x_2 \in \calN\}}\psi'(\x_2) & \dots & \One_{\{\x_{L-1} \in \calN\}}\psi'(\x_{L-1}) & \One_{\{\x_L \in \calN\}}\psi'(\x_{L}) \\
        \mathbf{0} & \mathbf{0} & \dots & \mathbf{0} & \mathbf{0}
    \end{pmatrix}.
\end{equation*}
\noindent \textbf{Position Encoding.}
To allow the attention layer to access the position of the input tokens, we next add a position encoding to each input token. Define $\phi': [L] \to \{\pm 1\}^{\log L}$ be the binary encoding of numbers in $L$. For each $i \in [L]$, we encode the position $i$ using a function $\phi$ defined as follows
\begin{align*}
     \phi(i) =  \phi'(L+1-i). 
\end{align*}
After adding the position encoding, the input context has the following form. 
\begin{align*}
    \Phi(\vec{\x}):=\begin{pmatrix}
        \psi'(\x_1) &  \dots & \psi'(\x_{L-1}) & \psi'(\x_L) \\
        \One_{\{\x_1 \in \calN\}}\psi'(\x_1)  & \dots & \One_{\{\x_{L-1} \in \calN\}}\psi'(\x_{L-1}) & \One_{\{\x_L \in \calN\}}\psi'(\x_{L}) \\
        \mathbf{0} &  \dots & \mathbf{0} & \mathbf{0} \\
        \phi(1)  & \dots & \phi(L-1) & \phi(L)
    \end{pmatrix}
\end{align*}
By construction, each column of the input context is in $\R^d$, for some $d = O(\log(\card{\cV}+\log L)$. 
Given the embedding of the input context, we now construct the SSM layer and the attention layer of the hybrid model.

\noindent \textbf{Mamba Layer.} 
We define the weights of the Mamba layer as follows. 
Let $\W_A,\W_B,\W_C$ be as follows, 
\begin{align*}
    \W_B 
    \begin{pmatrix}\psi'(\x_i)\\\One_{\{\x_i \in \calN\}}\psi'(\x_i)\\\mathbf{0}\\\phi'(i)\end{pmatrix} 
    &= \One_{\{\x_i \in \calN\}}\psi'(\x_i)
    , \quad
    \W_C \bv = \begin{pmatrix}\mathbf{0} \\ \mathbf{0} \\ \bv \\ \mathbf{0}\end{pmatrix}, 
\end{align*}

$\W_A = I$, and $\Delta(\x) = \Ind_{\{\x \in \cN\}}$. We claim the following guarantee for the constructed Mamba layer.
\begin{claim}\label{cl mamba2}
    Let $\vec{\x} \in \cV^L$ be a sequence of input contexts. Given the embedded context $\Phi(\vec{\x})$, the SSM layer with parameter $\W_A,\W_B,\W_C,\Delta(\x)$, has the following output
      \begin{align*}
        \begin{pmatrix}
        \psi'(\x_1) & \psi'(\x_2) & \dots & \psi'(\x_{L-1}) & \psi'(\x_L) \\
        \One_{\{\x_1 \in \calN\}}\psi'(\x_1) & \One_{\{\x_2 \in \calN\}}\psi'(\x_2) & \dots & \One_{\{\x_{L-1} \in \calN\}}\psi'(\x_{L-1}) & \One_{\{\x_L \in \calN\}}\psi'(\x_{L}) \\
        \phi({L+1-n_1}) & \phi({L+1-n_2}) & \dots & \phi({L+1-n_{L-1}}) & \phi({L+1-n_L}) \\
        \mathbf{0} & \mathbf{0} & \dots & \mathbf{0} & \mathbf{0} \\
        \phi(1) & \phi(2) & \dots & \phi(L-1) & \phi(L)
    \end{pmatrix},
    \end{align*}
where $n_i = \x_{\argmax_{1 \leq j \leq i} \x_j \in \calN}$
\end{claim}

\begin{proof}[Proof of \Cref{cl mamba}]

By our choice of $\Delta(\x_t)$, if $\x_t \in \N$, then $H_t = \W_B\Phi(\x_t)$ and if $\x_t \not \in \N$, then $H_t = H_{t-1}$. Notice that if we set $H_0$ to be a zero vector, then by induction, for each $t$, $H_t$ is a sparse vector, with the only non-zero component $\phi(L+1-n_i) = \psi'(n_i)$. Using an MLP layer to combine the output with the input, we know that the input context 
\begin{align*}
        \begin{pmatrix}
        \psi'(\x_1) & \psi'(\x_2) & \dots & \psi'(\x_{L-1}) & \psi'(\x_L) \\
        \One_{\{\x_1 \in \calN\}}\psi'(\x_1) & \One_{\{\x_2 \in \calN\}}\psi'(\x_2) & \dots & \One_{\{\x_{L-1} \in \calN\}}\psi'(\x_{L-1}) & \One_{\{\x_L \in \calN\}}\psi'(\x_{L}) \\
        \phi(L+1-{n_1}) & \phi(L+1-{n_2}) & \dots & \phi(L+1-{n_{L-1}}) & \phi(L+1-{n_L}) \\
        \mathbf{0} & \mathbf{0} & \dots & \mathbf{0} & \mathbf{0} \\
        \phi(1) & \phi(2) & \dots & \phi(L-1) & \phi(L)
    \end{pmatrix}.
    \end{align*}
\end{proof}

\noindent \textbf{Attention Layer.} Based on the output of the Mamba layer, we will now construct an attention layer that can solve the copy task. Let $\W_q,\W_k,\W_v$ be the weight matrices of the attention layer.
\begin{align*}
    \W_q \begin{pmatrix}\psi'(\x_i)
    \\\One_{\{\x_i \in \calN\}}\psi'(L+1-\x_i) 
    \\\phi'({n_i})\\\mathbf{0}\\\phi'(i)\end{pmatrix} &= M\phi({n_i}) ,\quad
    \W_k \begin{pmatrix}\psi'(\x_i)
    \\\One_{\{\x_i \in \calN\}}\psi'(L+1-\x_i) \\\phi'({n_i})\\\mathbf{0}\\\phi'(i)\end{pmatrix} = \phi(i) \\
    \W_v \begin{pmatrix}\psi'(\x_i)
    \\\One_{\{\x_i \in \calN\}}\psi'(L+1-\x_i) \\\phi'({n_i})\\\mathbf{0}\\\phi'(i)\end{pmatrix} &= \begin{pmatrix}\mathbf{0}\\0\\\mathbf{0}\\\psi'(\x_i)\\\mathbf{0}\end{pmatrix} \\
\end{align*}
We summarize the performance of the attention layer as the following claim.
\begin{claim}\label{cl tf copy}
    Let $\SSM(\vec{\x})$ be the output of the Mamba layer constructed above. By applying an attention mechanism with parameters $\W_q,\W_k,\W_v$ with a window size of $N$, the last output vector is a sparse vector with the only non-zero part $\psi(\x_{L+1-n_L})$.
\end{claim}
\begin{proof}[Proof of \Cref{cl tf copy}]
    Notice that the last output vector is defined as 
    \begin{align*}
        \sum_{i=L+1-N}^L \frac{\exp(M\phi(n_L)\phi(i))}{\sum_{i=L+1-N}^L \exp(M\phi(n_L)\phi(i))}\begin{pmatrix}\mathbf{0}\\0
        \\\mathbf{0}\\\psi'(\x_i)\\\mathbf{0}\end{pmatrix} = \begin{pmatrix}\mathbf{0}\\0
        \\\mathbf{0}\\\psi'(\x_{L+1-n_L})\\\mathbf{0}\end{pmatrix}
    \end{align*}
\end{proof}
In fact, if we use a full attention, then after passing the attention layer, the context now has the form
\begin{align*}
        \begin{pmatrix}
        \psi'(\x_1) & \psi'(\x_2) & \dots & \psi'(\x_{L-1}) & \psi'(\x_L) \\
        \One_{\{\x_1 \in \calN\}}\psi'(\x_1) & \One_{\{\x_2 \in \calN\}}\psi'(\x_2) & \dots & \One_{\{\x_{L-1} \in \calN\}}\psi'(\x_{L-1}) & \One_{\{\x_L \in \calN\}}\psi'(\x_{L}) \\
        \phi({n_1}) & \phi({n_2}) & \dots & \phi({n_{L-1}}) & \phi({n_L}) \\
        \psi'(x_{L+1-n_1}) & \psi'(x_{L+1-n_2}) & \dots & \psi'(x_{L+1-n_{L-1}}) & \psi'(x_{L+1-n_{L}}) \\
        \phi(1) & \phi(2) & \dots & \phi(L-1) & \phi(L)
    \end{pmatrix}.
    \end{align*}
This implies that by applying the natural decoding that copies only the row with $\psi'(\x_{L+1-n_1})$. The final sequence (up to arbitrarily small error) has the form
\textbf{\begin{equation*}
    \begin{bmatrix}
        \psi'(\x_{L+1-n_1}) & \psi'(\x_{L+1-n_2}) & \dots & \psi'(\x_{L+1-n_{L-1}}) & \psi'(\x_{L+1-n_L})
    \end{bmatrix}
\end{equation*}}

To conclude the proof of \Cref{th hybrid copy}, it remains to count the number of parameters and the working memory of the constructed hybrid model. 
\end{proof}

\subsection{Proof of \Cref{th lb associate recall}}\label{app proof lb recall}

\begin{theorem}[Restatement of \Cref{th lb associate recall}]
    Consider the task of associative recall with decoding. 
    There is a distribution $D$ over $\cV^L$ such that any pure state space model that can solve the task for some $\vec{\x}$ drawn from $D$ with probability $90\%$ must have $\sum_{i=1}^k\log(\card{\mathcal{S}_i}) \ge \Omega (W \log W)$, where $\mathcal{S}_i$ is the state space of the $i$th SSM layer and $W = \card{\cM}$, furthermore, any pure Transformer model that can solve the task with probability $90\%$ must have $\sum_{i=1}^kW_i \ge \Omega(L)$.
\end{theorem}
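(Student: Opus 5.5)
We mirror the proof of \Cref{th lb selective copying}: cast associative recall with decoding as a function composition $F(u(\vec{\x}),v(\vec{\x}))$, check \Cref{asp ssm lb} to get the SSM bound from \Cref{th SSM lb general}, check \Cref{ass lb Transformer} to get the Transformer bound from \Cref{th Transformer lb}, and mix the two hard distributions. Let $W=\card{\cM}$ and $b=\lceil\log W\rceil$.

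\textbf{SSM part.} We use a context that encodes a permutation. Let $u$ be a sequence of $W$ key--value pairs $(k_1,\pi(k_1)),\dots,(k_W,\pi(k_W))$ over word tokens, where the keys enumerate $\cM$ in a fixed order and $\pi:\cM\to\cM$ is a uniformly random permutation (or function). Let $v\in\{0,1\}^{b}$ be the trailing bit string, and let $F(u,v)=\pi(\Phi(v))$. Taking $Q=\{v^{(i)}\}$ to be the $W$ binary codes of the keys, we get $G(u)=(\pi(k_1),\dots,\pi(k_W))$, which determines $u$ since the key positions are fixed. So \Cref{asp ssm lb} holds with $q=W$.

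We should not plug in the crude final line of the proof of \Cref{th SSM lb general}, because $q\log\card{\mathcal{Y}}=W\log W$ would cancel the entropy. Instead we use the intermediate bound $\log\card{\mathcal{S}}\ge H(u)-q\,(H_2(\err)+\err\log\card{\mathcal{Y}})$ with the averaged error $\err$. Here $H(u)=\log W!=\Theta(W\log W)$. If the model succeeds with probability $90\%$ on a mixture with half its mass on this distribution, then $\err\le 1/5$. This gives $\log\card{\mathcal{S}}\ge W\log W(1-1/5)-O(W)=\Omega(W\log W)$. To realise the $(u,v)$ distribution as a genuine $\vec{\x}\in\cV^L$, we pad the front with word tokens and place $u$ then $v$ at the end, so that the last occurrence of $\Phi(v)$ lies inside $u$. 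One care point is that bit tokens $0,1$ must not appear in the padding or in $u$, so that $v(\vec{\x})$ is exactly the final block. \Cref{lm group ssm} reduces the multilayer case to a single layer as before.

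\textbf{Transformer part.} We exhibit $R$-local sensitivity with $R=\Omega(L)$. Take the first $L/2$ tokens to contain the bits of $v$ interleaved with words, and the last $L/2$ tokens to contain no bits. Then $v(\vec{\x})$ is determined by the first half. Choose $\vec{\x},\vec{\x'}$ that agree on the last $L/2$ positions but have different bit strings in the first half, pointing to two keys $a\neq a'$. Both keys should occur in the shared suffix and be followed by different tokens, so that $F(\vec{\x})\neq F(\vec{\x'})$. \Cref{th Transformer lb} then forces $\sum_i W_i\ge L/2$. As in the selective copying proof, $D_T$ randomises the remaining tokens so that failure has constant probability.

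\textbf{Combining.} Set $D=D_S/2+D_T/2$. Success probability $90\%$ on $D$ forces at least $80\%$ success on each component, and both bounds follow.

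\textbf{Main obstacle.} The main difficulty is the entropy bookkeeping in the SSM part. A uniform $u$ has entropy exactly matching $q\log\card{\mathcal{Y}}$, so the argument needs the error-weighted Fano term rather than the worst-case one, and it must verify that the padding and placement conditions keep $v(\vec{\x})$ and the "last occurrence" semantics consistent with $F(u,v)$.
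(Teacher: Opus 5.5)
Your overall route is the same as the paper's: write the task as $F(u,v)$, apply \Cref{th SSM lb general} through the error-weighted Fano bound, exhibit local sensitivity for \Cref{th Transformer lb}, and mix $D_S/2+D_T/2$. You are right that the stated bound $\Omega(m\log\card{\cV}-q\log\card{\mathcal{Y}})$ is vacuous here. What is actually needed is $\log\card{\mathcal{S}}\ge H(u)-q(H_2(\err)+\err\log\card{\mathcal{Y}})$, and the paper uses this implicitly by demanding success $7/8$. Your Transformer half is also fine and essentially matches the paper's, which places the bit string $\vec b$ before the word block.

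The step that fails is the claim that your SSM instance makes the task output equal $F(u,v)=\pi(\Phi(v))$. Your keys enumerate all of $\cM$, and your values $\pi(k_j)$ also lie in $\cM$, so every key token reappears as a value.

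Let $\Phi(v)=k_i$ and $j=\pi^{-1}(k_i)$. If $j\ge i$, the last occurrence of $k_i$ is the value slot of pair $j$, which comes after the key slot of pair $i$. The task's answer is then the following token, $k_{j+1}$ (or the first bit of $v$ when $j=W$), not $\pi(k_i)$. For a uniform permutation and a uniform key this happens with probability $(W+1)/(2W)\approx 1/2$. So $80\%$ success on the actual task does not give $\err\le 1/5$ for your $F$, and your bound as written does not follow.

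A random function does not remove these collisions. For a permutation, no reordering of the pairs helps either, since whichever key comes first reappears later as a value. You flagged exactly this "last occurrence" consistency as something to check, but only checked it for the bit tokens and the padding.

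The paper avoids the problem by splitting $\cM$ into disjoint halves: $\cM_1$ for keys and $\cM_2$ for values. With keys enumerating $\cM_1$ and $\pi:\cM_1\to\cM_2$ uniform, each key's last occurrence is its key slot. Then $q=\card{\mathcal{Y}}=W/2$, $H(u)=\frac W2\log\frac W2$, and your error-weighted bound gives $\log\card{\mathcal{S}}\ge\frac45\cdot\frac W2\log\frac W2-O(W)=\Omega(W\log W)$.

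Alternatively, you can keep your instance and charge the mismatch to Fano. The model's output differs from $\pi(k_i)$ with probability at most $\tfrac15+\tfrac{W+1}{2W}$. Using $H_2\le 1$ in place of monotonicity (since this error exceeds $1/2$), you still get $\log\card{\mathcal{S}}\ge\log W!-W-\bigl(\tfrac15+\tfrac{W+1}{2W}\bigr)W\log W=\Omega(W\log W)$. Either fix works, but the proposal as written rests on a false identity between $F$ and the task answer.
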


\begin{proof}[Proof of \Cref{th lb associate recall}]
    To prove the hardness of the task, we will consider a constraint version of the problem. To do this, we partition equally partition the vocabulary $\cM$ into $\cM_1 = \{\alpha_1,\dots,\alpha_{W/2}\} ,\cM_2 = \{\beta_{1},\dots,\beta_{W/2}\}$. For every possible input $\vec{\x}$, we restrict $\vec{\x}$ of the following form. $\vec{\x} = (\alpha_{i1},\beta_{i1},b_{i1},\dots,\alpha_{ik},\beta_{ik},b_{ik})\in \cV^L$. Here, for $j \in [l]$, $b_{ij}$ either does not appear or $b_{ij} \in \{0,1\}$. Now, we partition $\vec{\x}$ into two parts. Let $\vec{b}=(b_{i1},\dots,b_{ik})$ be the 0-1 subsequence of $\vec{\x}$ and $\vec{w} = (\alpha_{i1},\beta_{i1},b_{i1},\dots,\alpha_{ik},\beta_{ik},b_{ik})$ be subsequence of $\vec{\x}$ with every token in $\cM$. Given this partition, we write the problem as a function composition. For each $\alpha \in \cM_1$, let $\beta(\alpha) \in \cM_2$ be the next token of the last appearance of $\alpha$ in $\vec{w}$. Let $u = (\beta(\alpha))_{\alpha \in \cM_1} \in \cM_2^k$ and let $v \in \cM_1$ be the token corresponding to binary representation $\vec{b}$ of elements in $\cM_1$. Then $F(u,v) = \beta(v)$. We notice that by choosing $G(u)=(\beta(\alpha_1),\dots,\beta(\alpha_{W/2}))$ is an injection. 
    Taking $m=q=W/2$, by \Cref{th SSM lb general}, 
we know that when a pure SSM gives a sequence of input of the form $(u,v)$, where $u$ is uniformly drawn from $\cM_2^{M/2}$ and $v$ is drawn uniformly from $\cM_1$, then if the SSM wants to solve the task with probability at least $7/8$, it needs 
\begin{align*}
\sum_{i=1}^k \log(\card{\mathcal{S}_i)} \ge \Omega(W \log(\card{W})),   
\end{align*}

To simulate such a distribution over $(u,v)$ using a distribution $D_S$ over a long context $\vec{\x}$, we select each $(\alpha_{ij},\beta_{ij})$ uniformly at random and after selecting $\vec{w}$, we randomly select $\vec{b}$ and append it to $\vec{w}$.

    On the other hand, 
    by sampling $\vec{w}$ uniformly from, we know that with probability at least $99\%$, for each $\alpha \in \cM_1$, the last appearance of $\alpha$ must be at the last $\Tilde{O}(W)$ positions of $\vec{w}$. By \Cref{th Transformer lb}, we know that any pure Transformer model that can compute $F(u(\vec{\x}),v(\vec{\x}))$ with a constant probability must have $\sum_{i=1}^k W_i \ge \Omega(L)$ if we randomly selecting $\vec{b}$ and appending it before $\vec{w}$. we denote the resulting distribution by $D_T$.

To conclude the proof of \Cref{th lb associate recall}, we will choose a distribution $D:= D_S/2 + D_T/2$. Under this distribution, a pure SSM with $\sum_{i=1}^k \log(\card{\mathcal{S}_i)} < \Omega(W \log(W))$ has a constant failure probability when $\vec{\x}$ is drawn from $D_S$ and a Transformer with working memory less than $o(L)$ has a constant failure probability when $\vec{\x}$ drawn from $D_T$.

\end{proof}

\subsection{Proof of \Cref{th hybrid recall}}\label{app proof hybrid recall}

\begin{theorem}{Restatement of \Cref{th hybrid recall}}
Consider the task of associative recall with decoding. There is a three-layer hybrid model that is a combination of a mamba layer and two attention layers that can solve the associative recall with decoding task with probability $99\%$ for an input sequence $\vec{\x} \in \cV^L$ drawn from a uniform distribution. Furthermore, the hybrid model has an embedding dimension $d = O( \max(\log \card{\cV},\log L))$ such that the Mamba layer has $O(\card{\cV})$ state spaces, while the attention layer has dimension $d$ and a sliding window size of $\Tilde{O}(\card{\cV})$.    
\end{theorem}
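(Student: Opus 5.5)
We follow the template of \Cref{sec merge} and the selective-copying construction in \Cref{th hybrid copy}. A Mamba layer extracts the control variable $v(\vec{\x})$: it accumulates the $0$--$1$ subsequence into the binary code of $\Phi(v(\vec{\x}))$. The two attention layers then carry out associative recall with a sliding window of size $\Tilde{O}(\card{\cV})$.

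\textbf{Embeddings.} As in the proof of \Cref{th hybrid copy}, each token $\x$ is embedded as $\psi'(\x)$. We add a bit-indicator block $\One_{\{\x\in\{0,1\}\}}$ together with the bit value, a zero scratch block, and the positional code $\phi(i)=\phi'(L+1-i)$. The total dimension is $d=O(\max(\log\card{\cV},\log L))$.

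\textbf{Mamba layer (computing $v$).} We keep a state of $\lceil\log\card{\cM}\rceil$ coordinates holding the bits read so far, shifted into place. With $\Delta(\x)=\One_{\{\x\in\{0,1\}\}}$ the state changes only on bit tokens and is copied unchanged on word tokens, exactly as in \Cref{cl mamba2}. To write the $j$-th bit into the $j$-th slot, we choose a diagonal-plus-shift $\W_A$ so that $I-\W_A$ acts as a shift register. Each bit token shifts the register by one and $\W_B$ inserts the new bit; using $\pm1$ encodings, the register ends up equal to $\psi'(\Phi(v(\vec{\x})))$. An MLP then writes this code into the scratch block at every position, as in \Cref{cl mamba2}. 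The number of states is $2^{O(\log\card{\cV})}=O(\card{\cV})$, or $\poly(\card{\cV})$ if lengths of partial bit strings must also be tracked, which we absorb by fixing the bit-string length to $\log\card{\cM}$.

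\textbf{First attention layer (previous-token head).} Each position $i$ copies $\psi'(\x_{i-1})$ into a second scratch block. This uses query $M\phi(i-1)$ and key $\phi(i)$ with large $M$, which is a window of size $2$. This is the standard induction-head preparation: position $i$ now stores the pair $(\x_{i-1},\x_i)$.

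\textbf{Second attention layer (recall).} At the last position, the query is $M\psi'(\Phi(v))$ read from the Mamba scratch block, and the key at position $i$ is $\psi'(\x_{i-1})$. The value is $\psi'(\x_i)$. To select the \emph{last} occurrence, we add a small recency term to the score, $\epsilon\cdot$(position code), whose effect is dominated by the $M$-term. Softmax then concentrates, up to arbitrarily small error, on the most recent $i$ with $\x_{i-1}=\Phi(v)$, and the output is $\psi'(\x_i)$, the desired answer.

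\textbf{Probability and window.} It remains to show that with window $\Tilde{O}(\card{\cV})$ the target occurrence lies inside the window with probability $99\%$. Tokens outside $v(\vec{\x})$ are uniform, so each window position equals $\Phi(v)$ independently with probability $\ge 1/\card{\cV}$. A window of length $\card{\cV}\ln 100$ therefore misses $\Phi(v)$ with probability at most $(1-1/\card{\cV})^{\card{\cV}\ln100}\le 1/100$. Since $v$ is determined by the bit tokens and is independent of the word tokens, this holds for the specific $\Phi(v)$. One small point: the bit tokens interspersed in the window only dilute the window, which we handle by enlarging it by a constant factor.

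\textbf{Main obstacle.} The hard step is the Mamba shift-register. The paper's simplified Mamba uses constant $\W_B,\W_A$ with a scalar, input-dependent $\Delta$, so placing the $j$-th bit in the $j$-th slot needs care. My first attempt is a $\W_A$ with $I-\W_A$ equal to a cyclic shift on the register, gated by $\Delta$. If that fails, the fallback is to encode $v$ as $\sum_j 2^{-j}b_j$ in a single coordinate: the update $H\mapsto H/2+b$ is realized by $I-\Delta\W_A=\tfrac12 I$ on bit tokens. An MLP, using $O(\log\card{\cV})$ threshold units, then decodes this scalar into binary.
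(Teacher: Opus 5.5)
Your proposal is correct and follows essentially the same construction as the paper: a Mamba layer with $\Delta(\x)=\One_{\{\x\in\{0,1\}\}}$ and $\W_A = I - S$ for a shift matrix $S$. That is exactly your ``first attempt'' shift register, so the scalar fallback is unnecessary; it is followed by a previous-token attention layer and a recall attention layer that uses the Mamba code as query, a positional bias to pick the last match, and a window of $\Tilde{O}(\card{\cV})$ justified by uniformity. The differences are cosmetic: the paper builds the previous-token head, plus an identity head that keeps the current column, with an additive mask $B$ instead of positional query/key matching, and it only asserts the $99\%$ window bound, which your computation $(1-1/\card{\cV})^{\card{\cV}\ln 100}\le 1/100$ makes explicit.
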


\begin{proof}[Proof of \Cref{th hybrid recall}]
    We first construct the three-layer hybrid model and show the correctness of the construction. 

\noindent \textbf{Token Embeddings}
Let $\cB = \{0,1\}$ be the set containing the two bits in the vocabulary. We begin by constructing an embedding function that maps each $\x \in \cV$ to a vector in $\R^{d'}$ for some $d>0$.
Let $d' = \log \card{\cV}$. We define $\psi': \cV \to \{\pm 1\}^{d'}$ to be the binary encoding of the vocabulary $\cV$. For each $\x \in \cV$, we embed the token $\x$ as
\[\psi(\x) = \begin{pmatrix}
    \psi'(\x) & \mathbf{0} &\One_{\{\x \in \cB\}}\psi'(\x) & \mathbf{0}
\end{pmatrix}^\top.\]
Here $\mathbf{0} \in \R^{\ell}$ for some $\ell \le O(\log(\card{\cV})+\log L)$ is a zero vector. Also, $\One_{\{\x \in \cB\}}$. For a given input context $\vec{\x}$, the embedded context has the form of 
\begin{equation*}
    \begin{pmatrix}
        \psi'(\x_1) & \psi'(\x_2) & \dots & \psi'(\x_{L-1}) & \psi'(\x_L) \\
        \mathbf{0} & \mathbf{0} & \dots & \mathbf{0} & \mathbf{0} \\
        \One_{\{\x_1 \in \cB\}}\psi'(\x_1) & \One_{\{\x_2 \in \cB\}}\psi'(\x_2) & \dots & \One_{\{\x_{L-1} \in \cB\}}\psi'(\x_{L-1}) & \One_{\{\x_L \in \cB\}}\psi'(\x_{L}) \\
        \mathbf{0} & \mathbf{0} & \dots & \mathbf{0} & \mathbf{0}
    \end{pmatrix}.
\end{equation*}
\noindent \textbf{Position Encoding.}
To allow the attention layer to access the position of the input tokens, we next add a position encoding to each input token. Define $\phi: [L] \to \{\pm 1\}^{\log L}$ be the binary encoding of numbers in $L$. After adding the position encoding, the input context has the following form. 
\begin{align*}
    \Phi(\vec{\x}):=\begin{pmatrix}
        \psi'(\x_1) &  \dots & \psi'(\x_{L-1}) & \psi'(\x_L) \\
        \mathbf{0} &  \dots & \mathbf{0} & \mathbf{0} \\
        \One_{\{\x_1 \in \cB\}}\psi'(\x_1)  & \dots & \One_{\{\x_{L-1} \in \cB\}}\psi'(\x_{L-1}) & \One_{\{\x_L \in \cB\}}\psi'(\x_{L}) \\
        \mathbf{0} &  \dots & \mathbf{0} & \mathbf{0} \\
        \phi(1)  & \dots & \phi(L-1) & \phi(L)
    \end{pmatrix}
\end{align*}
By construction, each column of the input context is in $\R^d$, for some $d = O(\log(\card{\cV}+\log L)$. 
Given the embedding of the input context, we now construct the SSM layer and the attention layer of the hybrid model.

\noindent \textbf{Mamba Layer.} 
We will need a state dimension $d_s$ equal to the number of bits per bit sequence. Let $\W_A = I - S$ be the block diagonal matrix, where $S$ is the permutation matrix satisfying
\begin{align*}
    S z = (z_2,\dots,z_{d_s-1},z_{d_s},0), \forall z \in \R^{d_s}.
\end{align*}

We define $\W_B$ and $\W_C$ as follows,

\begin{align*}
    \W_B \begin{pmatrix}\psi'(\x_i)
    \\
    \mathbf{0} \\
    \One_{\{\x_i \in \cB\}}\psi'(\x_i)\\\mathbf{0}\\\phi'(i)\end{pmatrix} = 
    \begin{pmatrix}\mathbf{0} \\ \One_{\{\x_i \in \cB\}} \psi'(x_i)\end{pmatrix} 
    , \quad\W_C 
    \bv
   = \begin{pmatrix}\mathbf{0}\\ \mathbf{0} \\\mathbf{0}\\ \bv \\\mathbf{0}\end{pmatrix}, 
\end{align*}

 and $\Delta(\x) = \Ind_{\{\x \in \cB\}}$. We claim the following guarantee for the constructed Mamba layer.
\begin{claim}\label{cl mamba}
    Let $\vec{\x} \in \cV^L$ be a sequence of input contexts. Given the embedded context $\Phi(\vec{\x})$, the SSM layer with parameters $\W_A,\W_B,\W_C,\Delta(\x)$, has the following output
      \begin{align*}
        \begin{pmatrix}
        \psi'(\x_1) & \psi'(\x_2) & \dots & \psi'(\x_{L-1}) & \psi'(\x_L) \\
        \mathbf{0} & \mathbf{0} & \dots & \mathbf{0} & \mathbf{0} \\
        \One_{\{\x_1 \in \cB\}}\psi'(\x_1) & \One_{\{\x_2 \in \cB\}}\psi'(\x_2) & \dots & \One_{\{\x_{L-1} \in \cB\}}\psi'(\x_{L-1}) & \One_{\{\x_L \in \cB\}}\psi'(\x_{L}) \\
        \phi({n_1}) & \phi({n_2}) & \dots & \phi({n_{L-1}}) & \phi({n_L}) \\
        \mathbf{0} & \mathbf{0} & \dots & \mathbf{0} & \mathbf{0} \\
        \phi(1) & \phi(2) & \dots & \phi(L-1) & \phi(L)
    \end{pmatrix},
    \end{align*}
where $n_i = v(\vec{\x}_{1:i})$ corresponds to token in $\cM$ that matches the binary representation of the $0-1$ subsequence of $\vec{\x}_{1:i}$.
\end{claim}

\begin{proof}[Proof of \Cref{cl mamba}]
We initialize $H_0 = 0$ and do induction over $H_t$. By our construction, only the third block of $H_t$ is non-zero. So, for the convenience of notation, we use $H_t$ to denote the third block of the state. Assuming that in time step $t$, $H_t=\phi(n_t)$, we prove this for time step $t+1$. Write $\phi(n_t) = (z_1,\dots,z_{d'})$. If $\x_{t+1} \not\in \{0,1\},$ then $\Delta(\x_{t+1}) =0,$ which implies $H_{t+1} = H_t$ and $n_t = n_{t+1}$. If $\x_{t+1} \in \{0,1\},$ then 
$H_{t+1} = SH_t + \x_{t+1} = \phi(n_{t+1})$. Thus, the third block of the matrix is always $\phi(n_t)$. This implies, after using an MLP layer to combine the output with the input sequence, we know that in the input context has the form of 
      \begin{align*}
        \begin{pmatrix}
        \psi'(\x_1) & \psi'(\x_2) & \dots & \psi'(\x_{L-1}) & \psi'(\x_L) \\
        \mathbf{0} & \mathbf{0} & \dots & \mathbf{0} & \mathbf{0} \\
        \One_{\{\x_1 \in \cB\}}\psi'(\x_1) & \One_{\{\x_2 \in \cB\}}\psi'(\x_2) & \dots & \One_{\{\x_{L-1} \in \cB\}}\psi'(\x_{L-1}) & \One_{\{\x_L \in \cB\}}\psi'(\x_{L}) \\
        \phi({n_1}) & \phi({n_2}) & \dots & \phi({n_{L-1}}) & \phi({n_L}) \\
        \mathbf{0} & \mathbf{0} & \dots & \mathbf{0} & \mathbf{0} \\
        \phi(1) & \phi(2) & \dots & \phi(L-1) & \phi(L)
    \end{pmatrix},
    \end{align*}

\end{proof}

\noindent \textbf{Transformer Block.} Based on the output of the Mamba layer, we will now construct a Transformer block that can solve the recall task. The Transformer block contains two layers. The first layer contains two heads, while the second layer contains only one head. 

We start with the construction of the first layer. The first layer maps each token $\x_t$ to $(\x_{t-1},\x_{t})^\top$. We denote by $\Attn^{(1)},\Attn^{(2)}$ the two heads of the first attention layer $\AT^{(1)}$. 
For the first head, we define $\W_q^{(1)}=\W_k^{(1)}=0$, $(B_i)_j = -\infty \Ind(j \neq i-1)$. That is to say $\Attn^{(1)}$ is used to select the previous element for each position. We set $\W_v^{(1)}$ such that 
\begin{align*}
    \W_v^{(1)} \begin{pmatrix}\psi'(\x_i)
    \\
    \mathbf{0} \\
    \One_{\{\x_i \in \cB\}}\psi'(\x_i)\\\mathbf{0}\\\phi'(i)\end{pmatrix}  = \begin{pmatrix}\mathbf{0}\\ \psi'(\x_i) \\
    \mathbf{0}\\\mathbf{0}\\\mathbf{0}\end{pmatrix}
\end{align*}
For the second head, we set $\W_q^{(1)}=\W_k^{(1)}=0$, with $(B_i)_j = -\infty \Ind(j \neq i).$ And we set $\W_v^{(2)} = I$. This implies given any input sequence $\vec{\x}$, we have 
\begin{align*}
    \AT^{(1)}(\SSM(\vec{\x})) = \begin{pmatrix}
        \psi'(\x_1) & \psi'(\x_2) & \dots & \psi'(\x_{L-1}) & \psi'(\x_L) \\
        \psi'({\x_0}) & \psi'(\x_1) & \dots & \psi'(\x_{L-2}) & \psi'(\x_{L-1}) \\
        \One_{\{\x_1 \in \cB\}}\psi'(\x_1) & \One_{\{\x_2 \in \cB\}}\psi'(\x_2) & \dots & \One_{\{\x_{L-1} \in \cB\}}\psi'(\x_{L-1}) & \One_{\{\x_L \in \cB\}}\psi'(\x_{L}) \\
        \phi({n_1}) & \phi({n_2}) & \dots & \phi({n_{L-1}}) & \phi({n_L}) \\
        \mathbf{0} & \mathbf{0} & \dots & \mathbf{0} & \mathbf{0} \\
        \phi(1) & \phi(2) & \dots & \phi(L-1) & \phi(L)
    \end{pmatrix}.
\end{align*}
That is to say, the first attention layer maps each column of the input into a query $\x_i^q = \phi(n_i)$, a key $\x_i^k = \psi'(\x_{i-1})$, and a value $\x_i^v = \psi'(\x_i)$.
We next design the second layer $\AT^{(2)}$ that contains only a single head to perform the attention mechanism using the output of $\AT^{(1)}$.
Let $\W_q,\W_k,\W_v$ be the weight matrices of the attention layer.
\begin{align*}
    \W_q \begin{pmatrix}\psi'(\x_i)\\\psi'({\x_{i-1}})\\\One_{\{\x_i \in \cB\}}\psi'(\x_i) \\\phi({n_i})\\\mathbf{0}\\\phi(i)\end{pmatrix} = M\phi({n_i}) ,\quad
    \W_k \begin{pmatrix}\psi'(\x_i)\\\psi'({\x_{i-1}})\\\One_{\{\x_i \in \cB\}}\psi'(\x_i) \\\phi({n_i})\\\mathbf{0}\\\phi(i)\end{pmatrix}= \psi'(\x_{i-1}) ,\quad
    \W_v \begin{pmatrix}\psi'(\x_i)\\\psi'({\x_{i-1}})\\\One_{\{\x_i \in \cB\}}\psi'(\x_i) \\\phi({n_i})\\\mathbf{0}\\\phi(i)\end{pmatrix} = \begin{pmatrix}\mathbf{0}\\\mathbf{0}\\\mathbf{0}\\\mathbf{0}\\\psi'(\x_i)\\\mathbf{0}\end{pmatrix} \\
\end{align*}
We also add a bias $B$ for each position, so that the argmax is achieved at the last recall token. This implies the last output vector is 
    \begin{align*}
        \sum_{i=1}^L \frac{\exp(M(\phi(n_L)\phi(\x_{i-1}) +B_{Li}))}{\sum_{i=1}^L \exp((M\phi(n_L)\phi(\x_{i-1})+B_{Li}))}\begin{pmatrix}\mathbf{0}\\\mathbf{0}\\\mathbf{0}\\\mathbf{0}\\\psi'(\x_i)\\\mathbf{0}\end{pmatrix} = \begin{pmatrix}\mathbf{0}\\\mathbf{0}\\\mathbf{0}\\\mathbf{0}\\\psi'(\x^*_{i+1})\\\mathbf{0}\end{pmatrix}
    \end{align*}
This implies that the hybrid model outputs the correct recall token. We remark that when tokens in $\cM$ are drawn uniformly, with probability $99\%$, each token in $\cM$ appears among the last $\Tilde{O}(W)$ tokens in $\vec{\x}$. This implies that instead of using a window of size $L$, a window of size $\Tilde{O}(W)$ is enough to get the same output.

\end{proof}

\subsection{Construction Implementations}

\label{app constructions}
Both of these constructions are implemented in the code repository. We show here the input embedding and the output of these different constructions. Selective copying's construction can be found in \Cref{fig:construction_selective_copy} and Associative Recall with Decoding's construction can be found in \Cref{fig:construction_decode_recall}. On interesting aspect of these constructions comes from their similarity and dissimilarity to the structures in learned models. Typically, learned models on selective copying learn to output the correct token at each position in the context, while the construction only provides the correct token in the last position. In contrast, the associative recall with decoding construction outputs the correct token at each position in the context, similar to learned models. This difference can be understood in when a task uses fixed positional differences. The selective copying construction uses a fixed mechanism to look-up a distance away from the last token, while learned models learn a more general relative positioning. Associative recall with decoding does not use relative positions, leading to a construction that more readily works at every token position. 

\begin{figure}
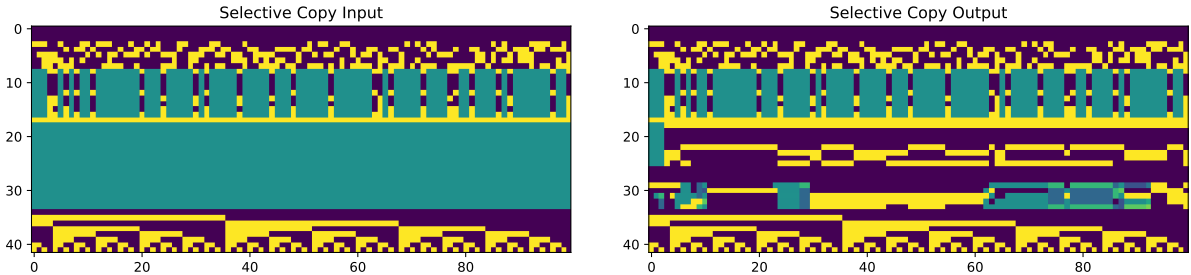

    \centering
    \includegraphics[width=0.49\linewidth]{fig/exps/constructions/selective_copy_input.pdf}
    \includegraphics[width=0.49\linewidth]{fig/exps/constructions/selective_copy_output.pdf}
    \caption{An example of the input/embedding and the output for selective copy. The aspects of the construction are kept in relatively similar positions in the implementation. Dark purple is -1, cyan is 0, and yellow is 1.}
    \label{fig:construction_selective_copy}
\end{figure}

\begin{figure}
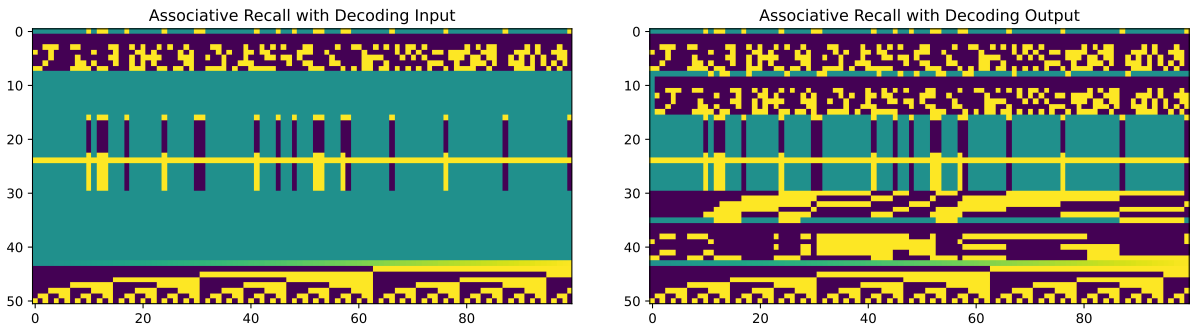

    \centering
    \includegraphics[width=0.49\linewidth]{fig/exps/constructions/decode_recall_input.pdf}
    \includegraphics[width=0.49\linewidth]{fig/exps/constructions/decode_recall_output.pdf}
    \caption{An example of the input/embedding and the output for associative recall with decoding. The aspects of the construction are kept in relatively similar positions in the implementation. Dark purple is -1, cyan is 0, and yellow is 1.}
    \label{fig:construction_decode_recall}
\end{figure}

\section{Experiment Details}\label{app experiments}

\subsection{Expressivity Experiments}

All experiments had identical learning rate sweeps. Additionally, all experiments were trained to convergence using an AdamW optimizer. We used 100 steps of warm-up followed by a linearly decaying learning rate. The learning rate sweep was over the maximum learning rate. 

Experiments were ran 11 times, with the mean performance shown along with the 10 and 90 percentiles as error bars. The positional encodings used where RoPE (although similar behavior was observed for learned positional encodings). Transformers are made from GPTNeoX \cite{black-etal-2022-gpt}, and SSM layers are coming from Mamba.

Experiment sweeps were conducted across token dimension. The Mamba layers contained more parameters than the Transformer layers, hence why some figures stop sooner for pure Transformers than for hybrids, and pure SSMs extend beyond the hybrids.

Experiments at this scale were done with input length 100, unless otherwise specified. Investigating the behavior of these different architectures further, some other parameters were varied. Specifically, we were interested in how the number of heads, or the side of the Mamba state, affects the performance of these models. A sweep of learning rates were tested from 1e-4 to 1e+0 by factors of $\sqrt{10}$. All experiments were trained to convergence, typically with over 4x the compute after the loss plateaus.

These models were also trained as seq-to-seq tasks rather than autoregressively. This was to prevent certain simple tasks from having simpler properties, such as becoming cyclic, which the models could learn instead.

\noindent \textbf{Selective Copy.} For this task, we used number tokens $5$ through $10$ and 26 vocabulary tokens. Larger vocabulary sizes are explored below, although the behavior is similar. The default token dimension is $12$; this was the scale at which separations could be seen. At the scale of modern LLMs, this task is easily learned.

\noindent \textbf{Associative Recall with Decoding.} This task proved less learnable than the others. The target dimensions swept were between 24 and 768. We used a bit sequence length of 5, so a vocabulary size of $2^5=32$ beyond these two bits. These experiments also specifically used three layer models rather than two layer ones; all two layer models never learned anything.

\noindent \textbf{Multi-Key Associative Recall.} For this task, we used a query length of 2 and a vocabulary size of 8. This very small vocabulary came from needing to have seen the target pair in the context (of length 100). Any larger than $8^2=64$ keys would make failure to see the key occur with high probability. Similarly to selective copying, token dimensions were kept small, around 12.

\noindent \textbf{Needle in a Haystack.} We used a vocabulary of 100 tokens, plus two marker tokens to indicate the position of the needle, and where the needle should be outputted.

\subsection{Additional Experiments}

\begin{figure}
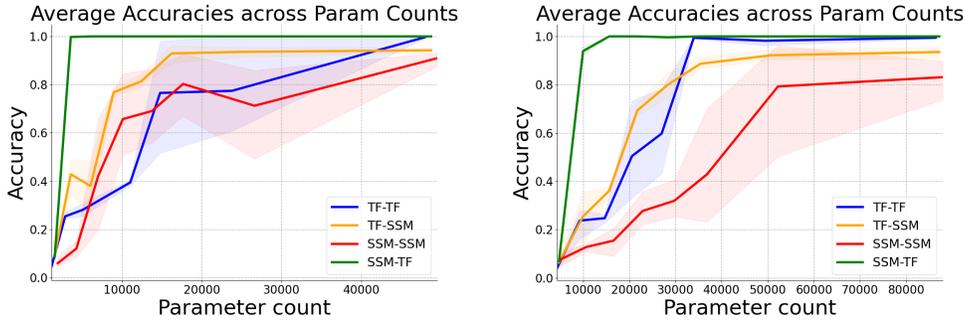

    \centering
    \includegraphics[width=0.4\linewidth]{fig/exps/var_copy/layers_dim_200.png}
    \includegraphics[width=0.4\linewidth]{fig/exps/var_copy/layers_dim_1000.png}
    \caption{Results of training the same architecture as the other, smaller vocabulary experiments, except with more tokens. The left figure shows results for a vocabulary of size 200, and the right figure shows results for a vocabulary of size 1000.}
    \label{fig:app_var_copy_more_dim}
\end{figure}

\noindent \textbf{Selective Copy.} We also tried larger vocabulary sizes for this task; this is something not expected to substantially change behavior. While a larger vocabulary is definitely possible, results were not significantly different for vocabularies of 200 or 1000. See \Cref{fig:app_var_copy_more_dim}.

Results for changing number of heads and state dimension can be see in \Cref{fig:app_var_copy}. One of the results immediately apparent from these experiments is that when the number of heads is any higher than 1, all of the models perform terribly. This is a result of performing experiments at this tiny scale. The deeper reason for this is likely that, in standard implementations of multi-head attention, the square key, query, and value matrices are partitioned into rectangular blocks, which act as the different heads. This, in essence, makes each head only have a projection into a space that is a fraction of the original hidden state. At this small scale, any such decrease can easily ruin performance. For example, at token dimension $12$ and $4$ heads, each head projects into $\sR^3$.

The same does not hold for increasing the dimension of the state inside of the Mamba model. However, increasing the state did not improve performance in any clear, monotonic way. This issue likely stems from optimization issues with the very large scale of the hidden state in the Mamba models as compared to the embedding dimension of the model.

\begin{figure}
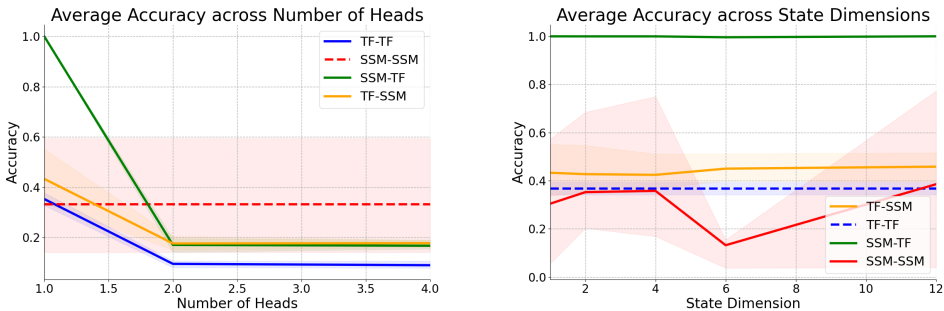

    \centering
    \includegraphics[width=0.4\linewidth]{fig/exps/var_copy/layers_num_heads.png}
    \includegraphics[width=0.4\linewidth]{fig/exps/var_copy/layers_state_dim.png}
    \caption{Results from training small models on Selective Copy. (a) changes the number of heads, and (b) increases the state dimension as described in Mamba. Defaults are token dimension 12, number of heads 1, and state dimension 1. Error bars are 0.1 and 0.9 quantiles around the mean.}
    \label{fig:app_var_copy}
\end{figure}

Additionally, we trained these small models on a more adversarial distribution, where according to the developed theorems, both the SSM and Transformer should perform poorly on half of the instances. For this distribution, half of the data points maintained a number token as their final token. This made the task difficult for SSMs to perform since they would need sufficient state to store all possible outputs for a different final token each time. On the other hand, half of the instances had very sparse number tokens, meaning that a Transformer with a small window for its attention would likely have the most recent number token outside of its context, hindering its performance. However, we should still expect the hybrid to perform well as both of these issues are mitigated.

As we can see in Figure \ref{fig:app_var_copy_adversary}, the hybrid still does outperform both of the pure models, and the reverse hybrid. However, unlike the uniform distribution seen in Figure \ref{fig:exp_var_copy}, the hybrid performs much better in this environment. This could be because of many reasons, one being that in a uniform distribution, it might be harder for the SSM to decern the task, whereas when half of the instances have the number token as the final token, the pattern may make itself more apparent. The Transformer still performs around as well in this distribution, possibly indicating that in both cases number tokens frequently are outside of the windows for the Transformer.

When expanding the available context window to these models, we observe two important things. First, Transformers tend to get \emph{worse}. This is different from what is predicted by our theory, indicating that there is an additional learnability difficulty for Transformers with larger contexts which hybrids avoid. Second, with large window sizes, the hybrid also frequently suffers, learning no better than the Transformer. This, again, is due to learnability issues rather than expressivity ones.

\begin{figure}
    \centering
    \includegraphics[width=0.5\linewidth]{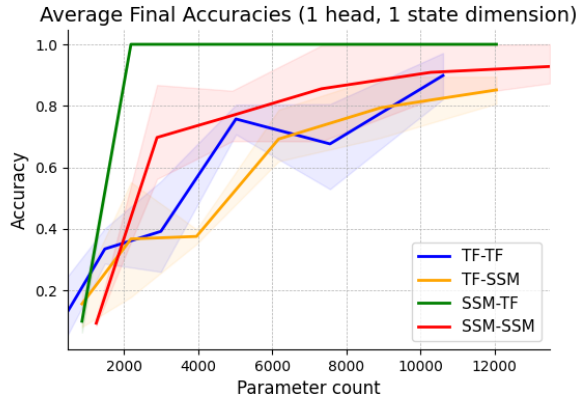}
    \caption{Selective Copy trained on an adversarial distribution, where half of the instances have a number token as their final token (hard for SSM), and half of the instances embed their last number token early in the sequence (hard for Transformers). This distribution is empirically easier for SSMs than uniform.}
    \label{fig:app_var_copy_adversary}
\end{figure}

\begin{figure}
    \centering
    \includegraphics[width=0.45\linewidth]{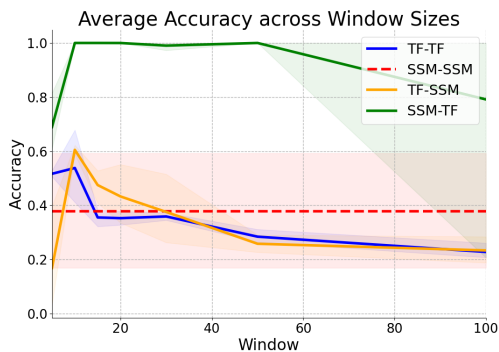}
    \caption{Results from training small models on Selective Copy. This changes the window of the context available to the model. }
    \label{fig:app_exp_var_copy}
\end{figure}

\noindent \textbf{Multi-Key Associative Recall.}

Increasing the number of heads and the size of the state tends to show both what is expected from the theory as well as the optimization issues more prevalent in the Selective Copy experiments. First, performance drops because of optimization issues, but after sufficient heads/state is added, performance again begins to grow.

We can also see that when the side of the state increases, the pure SSM model grows in performance from a state dimension of 2 to a state dimension of 6, before decreasing again, likely again from these optimization issues.

Also similar to selective copying, we see that as window size increases, the performance of the Transformer degrades, rather than improving. This is still due to learnability issues for long contexts with these Transformers, which hybrids mitigate.

\begin{figure}
    \centering
    \includegraphics[width=0.45\linewidth]{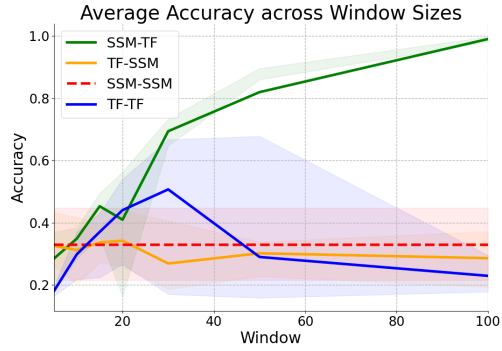}
    \caption{Results from training small models on Multi-Key Associative Recall. This changes the window of the context available to the model.}
    \label{fig:app_exp_assoc_recall_mk}
\end{figure}

\begin{figure}
    \centering
    \includegraphics[width=0.4\linewidth]{fig/exps/assoc_recall_mk/layers_num_heads.png}
    \includegraphics[width=0.4\linewidth]{fig/exps/assoc_recall_mk/layers_state_dim.png}
    \caption{Results from training small models on Multi-Key Associative Recall. (a) changes the number of heads, and (b) increases the state dimension as described in Mamba. Defaults are token dimension 12, number of heads 1, and state dimension 1. Error bars are 0.1 and 0.9 quantiles around the mean.}
    \label{fig:app_assoc_recall_mk}
\end{figure}

\end{document}